\newcommand{\CE}{\text{CE}}
\newcommand{\AUAC}{\text{AUAC}}
\definecolor{putracolor}{rgb}{0.2,0.6,0.6}
\definecolor{atacolor}{rgb}{0.4,0.5,0.1}
\definecolor{skcolor}{rgb}{0.6,0.5,0.1}
\definecolor{ericcolor}{rgb}{0.4,0.3,0.1}
\definecolor{todocolor}{rgb}{0.9,0.1,0.1}
\definecolor{changedcolor}{rgb}{0.42,0.27,0.57}
\definecolor{removedcolor}{rgb}{0.867,0.176,0.361}
\def\eqref#1{Eq.\ (~\ref{#1})}
\def\floor#1{\lfloor #1 \rfloor}
\def\1{\bm{1}}
\DeclareMathAlphabet{\mathsfit}{\encodingdefault}{\sfdefault}{m}{sl}
\SetMathAlphabet{\mathsfit}{bold}{\encodingdefault}{\sfdefault}{bx}{n}
\def\gA{{\mathcal{A}}}
\def\gB{{\mathcal{B}}}
\def\gD{{\mathcal{D}}}
\def\gG{{\mathcal{G}}}
\def\gM{{\mathcal{M}}}
\def\gQ{{\mathcal{Q}}}
\def\gS{{\mathcal{S}}}
\def\sP{{\mathbb{P}}}
\def\sR{{\mathbb{R}}}
\newcommand{\E}{\mathbb{E}}
\theoremstyle{definition}
\newtheorem{example}{Example}[section]
\newtheorem{theorem}{Theorem}[section]
\newtheorem{definition}{Definition}[section]
\title{QA-Calibration of Language Model Confidence Scores}
\author{Putra Manggala \thanks{ Work partially done when author was an intern at Amazon.} \\
University of Amsterdam\\
\texttt{putra.manggala@gmail.com} \\
\AND
Atalanti Mastakouri, Elke Kirschbaum \& Shiva Prasad Kasiviswanathan \\
Amazon \\
\AND
Aaditya Ramdas \\
Amazon and Carnegie Mellon University \\
}
\begin{document}

\maketitle

\begin{abstract}
To use generative question-and-answering (QA) systems for decision-making and in any critical application, these systems need to provide well-calibrated confidence scores that reflect the correctness of their answers. 
Existing calibration methods aim to ensure that the confidence score is {\it on average} indicative of the likelihood that the answer is correct.  We argue, however, that this standard (average-case) notion of calibration is difficult to interpret for decision-making in generative QA. To address this, we generalize the standard notion of average calibration and introduce QA-calibration, which ensures calibration holds across different question-and-answer groups. We then propose discretized posthoc calibration schemes for achieving QA-calibration. 
We establish distribution-free guarantees on the performance of this method and validate our method on confidence scores returned by elicitation prompts across multiple QA benchmarks and large language models (LLMs).

\end{abstract}
\vspace*{-2ex}
\section{Introduction}
Language models (LMs) built on transformer-based architectures are capable of producing texts that are both coherent and contextually relevant for a large range of applications~\citep{brown2020language, chowdhery2023palm, achiam2023gpt}. In question-and-answering (QA) systems, these models generally perform well, but occasionally produce inaccurate answers -- a phenomenon generally referred to as {\em hallucination}~\citep{huang2023survey}. Confidence estimates that are paired with the answers can be used as an interpretable indicator of the LM's accuracy~\citep{steyvers2024calibration}. But for this, these confidence scores have to be well-calibrated, i.e., match the actual accuracy of the model.

To evaluate whether the obtained confidence scores are actually well-calibrated, a common criterion is the expected  (average-case) calibration error~\citep{tian2023just, xiong2023can}. Suppose a model claims that its answer has a confidence of $p$. Based on only this one answer, it is not possible to know whether this confidence was well-calibrated or not. But when considering multiple question-answer pairs--say $N_p$ pairs each with a claimed confidence $p$, we can verify how many answers are actually correct and measure the error between the claimed confidence and the model's accuracy. Averaging over these errors quantifies of the model's confidence scores.\footnote{Note that the accuracy of the LM is itself unaffected by calibration, as calibration does not change the weights of the LM model.}

While this average-case calibration measure makes sense for models trained and evaluated on specific tasks, its applicability to generative QA is questionable because it averages over all QA pairs. This is because generative QA systems can be applied across various domains and topics, such as geography, politics, or medicine. Consider, for instance, the QA pairs shown in Figure~\ref{fig:user_qa_interaction}. On average, this model has a calibration error of $0.5$. From User 1’s perspective, the calibration is much worse, with an error of 0.8. Meanwhile, User 2 has a completely different experience, finding the model to be much better calibrated than indicated by the average calibration error. This motivates our notion of {\em QA-calibration}, where the calibration target is conditional on the \emph{group of the question-and-answer pair}.  

Previous works have explored group-wise calibration for classifiers, using pre-specified groupings of their covariates (e.g., race or gender)~\citep{kleinberg2016inherent, pleiss2017fairness}. However, it is not clear how this idea can be transplanted to the generative QA setting.

\begin{figure}[t]
\centering
        \includegraphics[width=\textwidth]{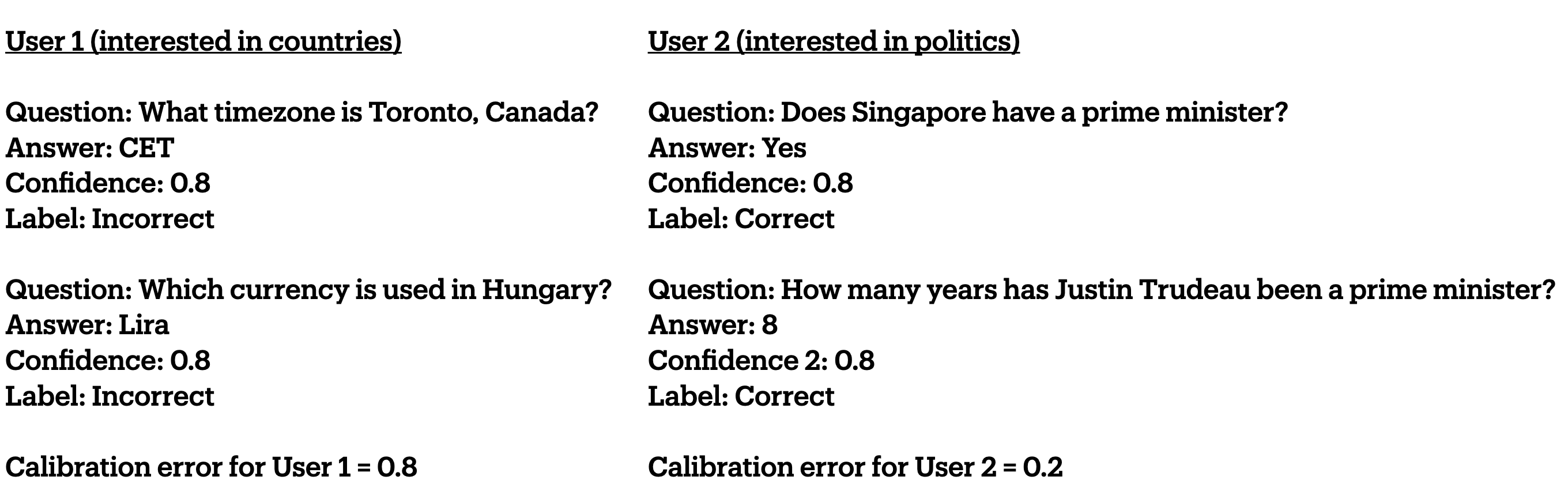}
        \caption{
        Two users interact separately with an LM by inputting questions and obtaining answers and confidence scores from the LM. The LM could be calibrated on average across user types, but each individual user may not have calibrated confidence scores. See Example~\ref{ex:example_1} for more details.}
    \label{fig:user_qa_interaction}
\end{figure}

A common approach to obtain calibrated confidence scores in LMs is to use confidence elicitation via prompting~\citep{tian2023just, xiong2023can}. The advantage of this approach is that it can be executed with only black-box access to LMs.\footnote{Recent research indicates that, even in a setting where one has access to token-based likelihood, it does not necessarily capture the overall semantic uncertainty~\citep{kuhn2023semantic}.}
However, the issue with a pure elicitation via prompting approach is that the performance is sensitive to choice of prompts and model~\citep{sclar2023quantifying}, and it does not have any rigorous calibration guarantees. 
These issues can be mitigated by performing a posthoc calibration of the elicited confidence scores. 
Our proposed approach is such a posthoc calibration method.

A limitation of relying on LM-elicited confidence directly, or posthoc calibrated using temperature scaling ~\citep{tian2023just}, is that the output probability is not discretized, making performance difficult to assess ~\citep{kumar2019verified}. We overcome this problem by developing posthoc QA-calibration schemes on elicited confidence scores that use ideas of (histogram) {\em binning}~\citep{zadrozny2001obtaining} and {\em scaling-binning}~\citep{kumar2019verified} that are by construction discretized. 

\noindent\textbf{Our Contributions:}  We make the following contributions in this paper:

\begin{enumerate}
\item 
We introduce QA-calibration, a principled and interpretable notion of calibration in generative QA.  Crucially, it defines a fixed mapping  $\beta$ from all possible question–and–answer pairs to a finite set. Our QA-calibration notion {\em generalizes} the standard average-case calibration notion by requiring calibration {\em conditional on $\beta$}. Due to this conditioning on $\beta$, the guarantee of QA-calibration is stronger than the standard calibration guarantee. By instantiating this framework with different $\beta$'s, we have the flexibility of defining question-and-answer groups across which we would like calibration guarantees to hold. In this paper, we present an instantiation of $\beta$ as a kd-tree, an adaptive multivariate histogram method that provides cohesive groupings of QA pairs.

\item We propose two posthoc calibration techniques for QA-calibration: (a) QA binning and (b) scaling QA binning.  The latter uses the former as a subroutine. This is particularly useful in the practical setting where some QA induced groups may lack data.  We show that both methods  satisfy a distribution-free approximate QA-calibration guarantee. In both cases, the approximation level is used to decide on the number of points per bin, a key hyperparameter.

\item Finally, we experiment on newly developed elicitation prompts across 5 commonly used QA benchmark datasets of various sizes. We find that QA binning and scaling QA binning achieve lower QA-calibration error than elicited confidence scores and other baselines.  We further justify our framework by demonstrating its performance in a downstream selective QA task. Our posthoc calibration algorithms achieve around 10-40\% increase in QA-calibration performance and up to 30\% increase in selective answering performance.

\end{enumerate}
\section{Defining QA-calibration}  \label{sec:setup}
\subsection{Notation}
We first define the question-and-answering process with verbalized confidence elicitation. In the most minimal form of interaction, a question $q$, is embedded into a prompt using a prompt function $m(q)$. The user obtains an answer $a=c(m(q))$ from an answering function $c:\gQ \rightarrow \gA$. Note that $c$ could be a randomized function, as typical LMs are.
Confidence elicitation~\citep{xiong2023can, tian2023just} enhances the prompt to let the user obtain confidence $h(m(q),a)$ from the confidence function $h: \gQ\times \gA \rightarrow [0,1]$.  Note that both answering and confidence functions are implicit in the LM interaction. In the following, for simplicity, we omit the dependence on prompt function $m$, and use $c(q)$ for $c(m(q))$ and $h(q,a)$ for $h(m(q),a)$. 
We assume that we have access to the binary ground truth $y$ for each pair $(q,a)$, which indicates whether the answer $a$ is correct ($y=1$) for the question $q$. Our $N$-element dataset is then $D=\{(q_i,a_i,h_i,y_i)\}_{i\in [N]}$ where $q_i$ is the $i$th question, $a_i=c(q_i)$, $h_i=h(q_i,a_i)$ and $y_i$ is the label which indicates whether the answer $a_i$ is correct for the question $q_i$.  For more details, refer to Table~\ref{tbl:important_variables}. We assume that each instance $(q,a,h,y)$ of $D$ is an i.i.d.\ realization of r.v. $(Q,A = c(Q),H = h(Q,A),Y)$ drawn from a fixed distribution  $P$ over the $\gQ \times \gA \times [0,1] \times [0,1]$ where
\begin{align}
\label{eqn:distribution}
P := P_Q \times P_{A|q} \times P_{H|q,a} \times P_{Y|q,a} ~ . 
\end{align}

\begin{definition} [Calibration]
We say that $h$ is calibrated for distribution $P$ if:
\begin{equation}
\label{eqn:calibration-target}
\E\left[Y \mid h(Q,A) = p\right] = p, \mbox{ a.s. for all $p \in [0,1]$}\footnote{As $Y$ is a Bernoulli r.v., the LHS can also be written as $\Pr[Y=1 \mid h(Q,A) = p]$.}
\end{equation}   

\end{definition}

In words, that the conditional distribution on $Y$ conditional on the prediction that $h(Q,A) = p$ is a Bernoulli distribution with bias $p$.
In the following, we refer to~\eqref{eqn:calibration-target} as average-case calibration to distinguish it from QA-calibration. 

While perfect calibration is impossible in finite samples, a standard measure for calibration error of $h$ is defined as follows. 
\begin{definition}[Expected (Average-case) Calibration Error] \label{defn:ece}The expected (average-case) calibration error of $h$ is defined as:
\begin{equation}
\label{eqn:lp-ece}
\CE(h) =  
    \mathbb{E}_{Q, A} \left[ 
        \left|
            \mathbb{E}[Y \mid h(Q,A)] - h(Q,A)
        \right|
        \right]
\end{equation}
\end{definition}
\subsection{QA-calibration Instead of Standard Calibration} 
In practice, a user who obtains an answer $a$ to their question $q$ with confidence $p$ wants to know the probability that $y=1$ among ``similar'' $(q,a)$ pairs with confidence $p$ (e.g., within the same topic of interest). A perfectly calibrated $h$ however, may not satisfy this requirement, as we illustrate in the following example\footnote{We give the smallest example where each group has more than one pair, since we are illustrating group calibration, but the argument holds  when there is only one element per group.}:

\begin{example} \label{ex:example_1} Suppose the dataset consists of two sets of QA pairs, $\beta_1=\{(q_{11},a_{11}), (q_{12},a_{12})\}$, and $\beta_2=\{(q_{21},a_{21}), (q_{22},a_{22})\}$ with confidence function $h$ and $\mathbb{P}(Y=1|q,a)$ as shown in Table~\ref{tab:ex_1}.

\begin{table}[t]
\caption{QA pairs with confidence and correctness for Example~\ref{ex:example_1}.}\label{tab:ex_1}
\centering
\begin{tabular}{c  c  c  c} 
 \toprule
 Question $q$ & Answer $a$ & Confidence $h(q,a)$ & $\mathbb{P}(Y=1|q,a)$ \\ [0.5ex] 
 \midrule
 $q_{11}$ & $a_{11}$ & 0.8 & 0.6 \\ 
 $q_{12}$ & $a_{12}$ & 0.8 & 0.7 \\
 \midrule
 $q_{21}$ & $a_{21}$ & 0.8 & 0.9 \\
 $q_{22}$ & $a_{22}$ & 0.8 & 1.0 \\
 \bottomrule
\end{tabular}
\end{table}

It can easily be seen that $h$ is perfectly calibrated according to~\eqref{eqn:calibration-target} when we assume that each QA pair have the same sampling probability. However, when we consider the sets $\beta_1$ and $\beta_2$ separately, each set is not well-calibrated. 
\begin{align*}
& \mathbb{E }[Y|h(q,a)=0.8,\beta_1] = \frac{1}{2} (
    \sP [Y=1 \mid q_{11},a_{11}] +
    \sP [Y=1 \mid q_{12},a_{12}]
) =  \frac{1}{2} (0.6+0.7) = 0.65, \\
& \mathbb{E}[Y|h(q,a)=0.8,\beta_2] = \frac{1}{2} (
    \sP [Y=1 \mid q_{21},a_{21}] +
    \sP [Y=1 \mid q_{22},a_{22}]
) =  \frac{1}{2} (0.9+1.0) = 0.95.
\end{align*}

Practically, when a user is only concerned with a particular set $\beta_1$ or $\beta_2$, \textit{the calibration claim does not align with reality}. For $\beta_1$, $h$ is overconfident, and underconfident for $\beta_2$. 
\end{example}

In the above, the expectation $\mathbb{E}[Y \mid h(q,a)]$ is not interpretable from a decision-making perspective.  The space of generative QA pairs $\gQ \times \gA$ is intuitively too large for a user, particularly since they are typically only interested in a smaller subset of $(q,a)$ pairs. We hence consider a generalization of~\eqref{eqn:calibration-target} necessary to perform adequate calibration for generative QA.

\begin{definition}[QA-calibration]
\label{dfn:calibration-target-beta}  $h$ is QA calibrated for the distribution $P$ in~\eqref{eqn:distribution} if:
\begin{equation}
\E \left[Y \mid h(Q,A) = p,\beta(Q,A)\right] = p,
\end{equation}
a.s. for all $p \in [0,1]$, where $\beta:\gQ\times\gA \rightarrow \gS$ is a fixed mapping to some embedding space $\gS$.
\end{definition}
  
  In this work, we focus on a finite set $\gS$ and $\beta$ is a (deterministic) discretization scheme that maps $(q,a)$ to a finite set $S$, i.e., $\beta$ induces a partitioning of the QA space according to the output of $\beta$. Note that this QA-calibration reduces to calibration in~\eqref{eqn:calibration-target} if $\beta(q,a)$ is the same for all question-and-answers $(q,a)$. Intuitively, $\beta$ is chosen such that the pre-image of a specific value of $\beta$ represents a grouping that an end-user might be interested in.

Going back to Example~\ref{ex:example_1}, QA-calibration would tell us the following: ``among question and answer pairs with confidence $p$ that are mapped to the same $\beta$ value (as the user's $(q,a)$ pair), the fraction who are correct ($y=1$) is also $p$''. 
Further, we also extend~\eqref{eqn:lp-ece} to propose an error metric for QA-calibration:
\begin{definition}[QA-calibration error] The QA-calibration error of $h$ with fixed mapping $\beta$ is defined as:
\label{def:betaCE}
\begin{equation*} 
\CE(h;\beta) = 
    \mathbb{E}_{Q, A} \left[ 
        \left|
            \mathbb{E} [Y \mid h(Q,A), \beta(Q,A)] - h(Q,A)
        \right|
        \right].
\end{equation*}
\end{definition}
\subsection{Generalizing (Average-case) Calibration via kd-tree Instantiation of $\beta$} \label{sec:kd}
The definition of QA-calibration (Definition~\ref{dfn:calibration-target-beta}), requires a predefined $\beta$ mapping.  While our schemes developed in Section~\ref{sec:schemes} can work with any arbitrary $\beta$ mapping, we propose a general approach for $\beta$ in the language model QA setting: the embed-then-bin method. This involves computing vector embeddings from question-and-answer pairs and then creating a multidimensional histogram over these embeddings.  The range of $\beta$ is then set to index over the histogram bins of the embeddings.  We write $\beta=\beta_{\text{bin}}\circ \beta_{\text{emb}}$, where $\beta_{\text{emb}}:\gQ\times\gA\rightarrow \mathbb{R}^M$ computes an $M$-dimensional embedding and $\beta_{\text{bin}}:\sR^M \rightarrow \gS$ computes the index of the embedding histogram bin containing the embedding.

Specifically, we use the \texttt{[CLS]} token embedding from the pre-trained DistilBERT model~\citep{sanh2019distilbert}: $\beta_{\text{emb}}:\gQ\times \gA \rightarrow \mathbb{R}^{768}$ ($M=768$).  As for the histogram, we adapt a kd-tree~\citep{bentley1975multidimensional}  to bin each vector to an integer representing the index of the partition containing the vector:
$\beta_{\text{bin}}:\sR^{768} \rightarrow S$, where $S$ is the set of partition indices.  See Appendix~\ref{app:kd-tree} for details on our kd-tree construction.  An important hyperparameter is the maximum depth of the tree $d$, which determines the number of partitions.

While DistilBERT is designed to be smaller and faster than its alternatives, it is still able to generate high-quality contextual embeddings that are rich in semantic information.  Next, a kd-tree performs efficient and adaptive binning for the high-dimensional embedding space by successively splitting along different dimensions.  Each partition of the tree contains semantically-similar $(q,a)$ pairs that are ``near'' each other in the embedding space.

The choice of kd-tree generalizes the standard calibration as when  $d=0$, $\CE(h;\beta)$ reduces to $\CE(h)$. The hyperparameter $d$ should be chosen based on the downstream metric that needs to be optimized. In the experiments section, we further describe how $\beta_{\text{bin}}$ are constructed.

\section{Achieving Posthoc QA-calibration} \label{sec:schemes}

As discussed earlier, an elicited confidence score function ($h$) of LM-model is not guaranteed to be QA calibrated (or even calibrated under average-case). For any $\beta:\gQ\times\gA \rightarrow \gS$, the goal of posthoc QA-calibration is to design a scheme that that can take any $h$ and transform it to a QA calibrated confidence score.  We wish to learn a posthoc calibrator $g:[0,1]\rightarrow[0,1]$ such that $g\circ h$ is (approximately) QA calibrated using a calibration dataset $D=\{(q_i,a_i,h_i,y_i)\}_{i\in [N]}$.
Additional missing details from this section are collected in Appendix~\ref{app:schemes}. 

Our recalibration schemes utilize the existing building blocks in posthoc calibration literature, like histogram binning~\citep{zadrozny2001obtaining} and scaling~\citep{platt1999probabilistic}. The novelty of our recalibration methods lies how we generalize and combine these building blocks achieving the best of both worlds: our binning component generalizes histogram binning to any partitioning and enables distribution-free guarantees, and our hierarchical scaling component reduces overfitting, enabling learning to be performed across partitions which may have varied number of data points.

Our schemes take the $\beta$ function as input, and all our schemes and theoretical results hold for any $\beta$. In our experiments, we will use the kd-trees based $\beta$ instantiation from Section~\ref{sec:kd}. We focus on methods that output discretized scores, since this has been shown to be easier to assess (Appendix~\ref{appendix:discretized_probability}).  We achieve this by first discretizing the QA space using $\beta$ and then ensuring that for each QA partition $s\in \gS$ we output a calibrated confidence score.  We start by describing an adaptation of the classical histogram binning~\citep{zadrozny2001obtaining} idea for achieving QA-calibration. In Section~\ref{subsection:scaling-binning}, we build upon this to provide a more robust algorithm that also uses scaling before binning.

\subsection{Approach 1: QA binning}
\label{subsection:binning}

QA binning is a standalone posthoc QA-calibration algorithm. It uses as a subroutine, uniform-mass-double-dipping histogram binning (UMD)~\citep{gupta2021distribution}, which we describe in Algorithm~\ref{alg:UMD}. 
Informally, the UMD procedure partitions the interval $[0,1]$ into $B$ bins using the histogram of $h$ values from the calibration dataset $D$, ensuring that each bin has approximately the same number of calibration points. It returns a calibrator function $ g_{\text{UMD}}$ that takes as input an uncalibrated confidence score, then allocates it to one of the $B$ bins, and returns the probability of label being $1$ (estimated as the average of the $y$ values from the calibration dataset $D$ that are mapped into that bin).

In Algorithm~\ref{alg:beta-binning-training}, we present a subroutine that uses the UMD procedure to construct a different calibrator for each QA-partition, which we will invoke with different inputs. Algorithm~\ref{alg:beta-binning-training} takes as input a dataset $\gD=\{\left(q_i,a_i,h_i,t_i\right)\}_{i\in [n]}$ where $h_i$ is the (elicited) confidence score for the answer $a_i$ and $t_i$ is the (potentially noisy) label of the correctness of answer $a_i$ for the question $q_i$. Algorithm~\ref{alg:beta-binning-training} partitions the input dataset $\gD$ to $\{\gD_s\}_{s\in \gS}$, where $\gD_s = \{ \left(q_i,a_i,h_i,t_i\right) : \beta(q_i, a_i) = s\}$.  For each $s\in \gS$, UMD is fit using $\gD_s$ to construct a $g_{\text{UMD}}$ calibrator that is stored in a hash table $\gG$, keyed by $s$.
We next describe the training and test processes of QA binning.

\renewcommand{\algorithmicrequire}{\textbf{Input:}}
\renewcommand{\algorithmicensure}{\textbf{Output:}}

\begin{algorithm}[t]
 \caption{QA binning: Train-time Subroutine}
 \label{alg:beta-binning-training}
 \definecolor{commentgray}{rgb}{0.25, 0.25, 0.25}
 \begin{algorithmic}[1]
  \REQUIRE Calibration data: $\gD=\{\left(q_i,a_i,h_i,t_i\right)\}_{i\in[n]}$. Hyperparameters: Minimum number of points per bin $b\in \mathbb{N}$ (say 50), tie-breaking parameter $\delta>0$ (default: $10^{-10}$) \\
  \ENSURE Hash table $\gG$ of fitted UMD calibrators $g_{\text{UMD}}$'s, keyed by partition index $s \in \gS$ \\
  {\color{commentgray}\textit{/* First construct UMD calibrator for points outside of bounded kd-tree spaces and store in $\gG$ with a default key, say \texttt{root}*/}} \\
 \STATE $g_{\gD} \leftarrow \text{UMD}(\gD, B=\lfloor |\gD|/b \rfloor, \delta=\delta)$. \\
  \STATE $\gG[\texttt{root}]\leftarrow g_{\gD}$
  \FOR{$s\in \gS$}
   \STATE{$\gD_s \leftarrow \left\{ (h_i,t_i):\beta(q_i,a_i)=s \right\}_{i \in [n]}$ \\
   \STATE{\color{commentgray}\textit{/* If 
 $\gD_s$ is empty then continue to the next $s$ */}} \\
   \STATE  $n_s \leftarrow \left| \gD_s \right|$}
   \STATE{$\gG[s]\leftarrow\text{UMD}(\mathcal{D}=\gD_s,B=\lfloor n_s/b \rfloor, \delta=\delta)$}
   
  \ENDFOR
  \RETURN{$\gG$}
 \end{algorithmic}
\end{algorithm}

\begin{algorithm}[t]
 \caption{QA binning: Test-time Subroutine}
 \label{alg:beta-binning-test}
 \definecolor{commentgray}{rgb}{0.25, 0.25, 0.25}
 \begin{algorithmic}[1]
  \REQUIRE Test question, answer and confidence score: $(q_{\text{test}},a_{\text{test}},h_{\text{test}})$, Hash table $\gG$ \\
  \ENSURE (Approximately) QA calibrated confidence score \\
 \IF{$\beta(q_{\text{test}},a_{\text{test}})\in \gG$}
  \STATE{$g_{\text{UMD}} \leftarrow \gG[\beta(q_{\text{test}},a_{\text{test}})]$}
  \RETURN{$g_{\text{UMD}}(h_{\text{test}})$} 
 \ELSE
\STATE {\color{commentgray}\textit{/* In this case, the test input $(q_{test},a_{test})$ does not lie in any of the bounded kd-tree spaces*/}} \\
   \STATE{$g_{\gD}\leftarrow \gG[\texttt{root}]$}
  \RETURN{$g_{\gD}(h_{\text{test}})$} 
 \ENDIF 
 \end{algorithmic}
\end{algorithm}

\noindent\textbf{Training Process.} We invoke Algorithm~\ref{alg:beta-binning-training} on our calibration dataset $D$, i.e., $\gD=D$. Hence, $t_i$ in Algorithm~\ref{alg:beta-binning-training} is set to the ground truth $y_i$. The output is a hash table of UMD calibrator functions indexed by the entries in the range $S$ of $\beta$.

\noindent\textbf{Test Process.} For a test input $(q_{\text{test}},a_{\text{test}},h_{\text{test}})$, we invoke Algorithm~\ref{alg:beta-binning-test}.  A fitted UMD ($g_{\text{UMD}}$) for this $(q_{\text{test}},a_{\text{test}})$ pair is retrieved from the hash table $\gG$ using $\beta(q,a)$ as key, which is then invoked to obtain a QA calibrated confidence score $g_{\text{UMD}}(h_{\text{test}})$.   In our kd-tree instantation, if $\gG$ does not have a calibrator for a $\beta(q_{\text{test}},a_{\text{test}})$, we use UMD to calibrate that pair.  This corresponds to points that lie outside of the bounded kd-tree spaces (refer to Appendix~\ref{app:kd-tree}).  Since we can generate a large amount of question-and-answer pairs without requiring ground truth, the number of test points in our experiments that fall outside the bounded spaces is negligible.

Note that while we assumed here access to the true labels $y_i$'s, we can also operate QA binning with proxy labels generated by say another LM on QA pairs, as discussed more in following subsections.
In practice, users are likely to define large $S$ (for example, large maximum depth of tree $d$) in order to obtain more cohesive groups of QA pairs.  UMD, and therefore QA binning, may overfit if the number of data points within $\gD_s$ is small --- a highly likely scenario if $\beta$ induces very fine partitions.  
We overcome these issues with our next approach that combines scaling with binning.

\subsection{Approach 2: Scaling QA binning}
\label{subsection:scaling-binning}

Scaling QA binning is a standalone posthoc QA-calibration algorithm, that is based on performing a scaling step prior to QA binning. In the original scaling-binning approach~\citep{kumar2019verified}, confidence scores are first scaled to their (maximum likelihood) fitted values using logistic regression, and the fitted values are then used as proxy ground truth for histogram binning. Adapting this paradigm, in our scaling QA binning, we use a scaling subroutine (defined below) to produce fitted values, which are then used as proxy ground truth for QA binning (e.g., $t$ in Algorithm~\ref{alg:beta-binning-training} is set to these fitted values).

We adapt the scaling procedure from~\citet{kumar2019verified} with a crucial change. The confidence score distributions in different partitions $s\in \gS$ may have different miscalibration profiles, since an LM may be underconfident for a partition but overconfident for another.  While scaling is useful for average-case recalibration~\citep{platt1999probabilistic}, for QA-calibration, we propose to use a hierarchical logistic regression model with partial pooling~\citep{goldstein2011multilevel}.  Hierarchical model distinguishes between fixed effects (consistent across different partitions) and random effects (allowing for variations at different partitions).  The latter explains the variability between partitions that may not be captured by fixed effects alone.  Partial pooling allows for information sharing across partitions, which improves estimates for partitions with few data points.  In our kd-tree instantiation, as the maximum depth parameter $d$ increases, some partitions may have only a few data points.

We next describe the training and test processes of scaling QA binning.

\noindent\textbf{Training Process.}  With an abuse of notation, let $s[i]$ denote the partition index containing $(q_i,a_i)$.   
We define the scaler $g_{\text{scaler}}:[0,1] \rightarrow [0,1]$ via the following likelihood of $y_i$:
\begin{align}
\label{eqn:vanilla_likelihood}
Y_i &\sim \text{Bernoulli}\left( \text{logit}^{-1}\left(B_0 + U_{s[i]} + (B_1 + V_{s[i]}) h_i\right)\right),
\end{align}
where $B_0$ is a fixed intercept, $U_{s[i]}$ is the random intercept for partition $s[i]$, $B_1$ is the fixed slope for confidence score $h_i$, and $V_{s[i]}$ is the random slope for partition $s[i]$.   On top of random intercepts, which assumes a different additive baseline per partition, we use random slopes which allow the relationship between the accuracy and confidence score to differ for each partition.

\begin{algorithm}[t]
 \caption{Scaling QA binning: Train-time Subroutine}
 \label{alg:scaling-beta-binning-training}
 \definecolor{commentgray}{rgb}{0.25, 0.25, 0.25}
 \begin{algorithmic}[1]
  \REQUIRE Calibration data for scaling: $\gD^1=\{\left(q^1_i,a^1_i,h^1_i,t^1_i\right)\}_{i\in[n_1]}$. Calibration data for binning: $\gD^2=\{\left(q^2_i,a^2_i,h^2_i,t^2_i\right)\}_{i\in[n_2]}$. Hyperparameters: Minimum number of points per bin $b\in \mathbb{N}$ (say 50), tie-breaking parameter $\delta>0$ (default: $10^{-10}$) \\
  \ENSURE Hash table $\gG$ of fitted UMD calibrators $g_{\text{UMD}}$'s, keyed by partition index $s \in \gS$ \\
  
\STATE  $g_{\text{scaler}} \leftarrow$ \text{Fit~\eqref{eqn:vanilla_likelihood} on }$\{\left(h^1_i,t^1_i\right)\}_{i\in[n_1]}$ \\
\STATE{\color{commentgray}\textit{/* Construct proxy ground truth $\tilde{y}$. See Definition~\ref{dfn:misspecification}./}} \\
\STATE  $\{\tilde{y}^2_i\}_{i\in[n_2]} \leftarrow \{g_{\text{scaler}}(h^2_i)\}_{i\in[n_2]}$ \\
 \STATE $\gG \leftarrow$ \text{Execute Algorithm~\ref{alg:beta-binning-training} with parameters} $\gD=\{\left(q^2_i,a^2_i,h^2_i,\tilde{y}^2_i\right)\}_{i\in[n_2]},b=b,\delta=\delta$ \\
    \RETURN{$\gG$}
 
 \end{algorithmic}
\end{algorithm}

We first split the calibration dataset $D$ (by default equally) into 2 sets, $D^1$ and $D^2$, and invoke Algorithm~\ref{alg:scaling-beta-binning-training} on $D^1$ and $D^2$. Similar to the QA binning approach (Algorithm~\ref{alg:beta-binning-training}), this produces a hash table $\gG$ of $g_{\text{UMD}}$'s. The partition of each instance $s$ determines which random intercept $U_s$ and slope $V_s$ to use.  In our kd-tree instantiation, if a point lies outside of bounded kd-tree spaces, we set the random effects to zero, thus assuming that this point behaves similarly to the overall population average.

\noindent\textbf{Test Process.} Similar to QA binning, for a test input $(q_{\text{test}},a_{\text{test}},h_{\text{test}})$, we invoke Algorithm~\ref{alg:beta-binning-test}.

Due to the scaling step, the computational cost of scaling QA binning training process is more expensive than that of QA binning.  Both schemes have the same test process.

\subsection{Distribution Free Analysis of QA binning and Scaling QA binning} \label{sec:theory}

In the next result, we prove a high probability bound on the QA-calibration error (Definition~\ref{def:betaCE}) of the above schemes.  To formalize the guarantee, we adapt the conditional calibration notion from~\citet{gupta2020distribution, gupta2021distribution} to the QA-calibration setting, defined below:\!\footnote{One could define $(\epsilon, \alpha)$-marginal QA-calibration: $\mathbb{P}\left( \left| \mathbb{E}[Y \mid h(Q,A), \beta(Q,A)] - h(Q,A) \right| \leq \epsilon \right) \geq 1-\alpha$. Conditional calibration is a stronger definition than marginal, as it requires the deviation between $\mathbb{E}[Y \mid h(Q,A), \beta(Q,A)]$ and $h(Q,A)$ to be at most $\epsilon$ for every  $(s,r)$, including rare ones, not just on average.}

\begin{definition}[Conditional QA-calibration] \label{def:cond}
Let $\epsilon,\alpha \in (0,1)$ be some given levels of approximation and failure respectively. Confidence function $h: \gQ \times \gA \rightarrow [0,1]$ is $(\epsilon, \alpha)$-conditionally QA calibrated for discretization scheme $\beta: \gQ \times \gA \rightarrow S$ if for every distribution $P$ defined in~\eqref{eqn:distribution},
\begin{align*}
\mathbb{P}\left( \forall s \in \gS, p \in \text{range}(h), \left| \mathbb{E}[Y \mid h(Q,A)=p, \beta(Q,A)=s] - p \right| \leq \epsilon \right) \geq 1-\alpha.
\end{align*}
\end{definition}
\vspace*{-.5ex}

This is a distribution-free (DF) guarantee since they are required to hold for all distributions $P$ over $\gQ \times \gA \times [0,1] \times \{0, 1\}$ without restriction. In order to estimate $\epsilon$ for both our posthoc algorithms, we permit label misspecification defined as follows:

\begin{definition} \label{dfn:misspecification}(Misspecified proxy ground truth) Let the random variable $\tilde{Y}\in [0,1]$ with distribution $P_{\tilde{Y}|(q,a)}$ be a proxy for ground truth $Y$. We constrain the misspecification in the following way: assume that there is some (minimal) $\nu \in [0,1]$ such that for all $p\in [0,1]$, 
\begin{align*}
\max(\mathbb{E}[Y \mid h(Q,A) = p] - \nu, 0) \leq \mathbb{E}[\tilde{Y}\mid h(Q,A) = p] \leq \min(\mathbb{E}[Y \mid h(Q,A) = p ] + \nu, 1)  ,
\end{align*}
 
\end{definition}

In practice, $\tilde{y}$ can come from two sources: 1) an LM-constructed ground truth (see ground truth proxy in Table~\ref{tbl:important_variables}), and 2) the fitted values from our S step in scaling QA binning. Let $\tilde{D}=\{(q_i,a_i,h_i,\tilde{y}_i) \}_{i\in [N]}$, denote a (proxy) dataset with samples from $P= P_Q \times P_{A|q} \times P_{H|q,a} \times P_{\tilde{Y}|q,a}$.
To prove calibration guarantees for our schemes, we rely on the following result.

\begin{restatable}[Distribution-free QA-calibration Guarantee]{theorem}{approximation}\label{main}
Consider an input calibration dataset $\tilde{D}$ defined above with misspecification factor $\nu$ from Definition~\ref{dfn:misspecification}. Assume that the $h_i$'s are distinct,  number of points per bin $b \geq 2$, and number of instances within each partition $n_s \geq b$ for every $s\in \gS$. The calibrator $g_{\text{UMD}}$ retrieved in Line 2 of Algorithm~\ref{alg:beta-binning-test}, trained using Algorithm~\ref{alg:beta-binning-training} with input $\gD = \tilde{D}$, is $(\epsilon,\alpha)$-conditionally QA calibrated for any $\alpha\in (0,1)$, with $ \epsilon = \sqrt{\frac{\log(2N/b\alpha)}{2(b-1)}}+\nu$.

\end{restatable}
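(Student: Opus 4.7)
The plan is to prove the bound by reducing to a per-partition analysis, invoking a single-bin distribution-free concentration guarantee for UMD, applying a union bound across all bins in all partitions, and finally closing the gap between the proxy ground truth $\tilde Y$ and the true label $Y$ via the misspecification parameter $\nu$.

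First I would observe that Algorithm~\ref{alg:beta-binning-training} trains an independent UMD calibrator on the sub-dataset $\gD_s$ of size $n_s$ for each partition $s \in \gS$. Conditional on the partition labels, the points in $\gD_s$ are i.i.d.\ from $P_{H,\tilde Y \mid \beta(Q,A)=s}$, so the event inside the probability in Definition~\ref{def:cond} decomposes partition by partition and it suffices to control each UMD separately. Within partition $s$, UMD splits $[0,1]$ into $B_s = \lfloor n_s/b \rfloor$ bins using empirical quantiles of the $h$-values; by construction each bin contains between $b$ and $2b-1$ calibration points, and the calibrator's output on a test score $h_{\text{test}}$ falling in bin $I$ is the empirical mean $\hat\mu_{I,s}$ of the $\tilde y_i$'s assigned to $I$. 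Moreover, since the assumption $n_s \geq b$ ensures $B_s \geq 1$ for all $s$, no partition's calibrator degenerates.

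Next I would invoke the uniform-mass double-dipping concentration result underlying Algorithm~\ref{alg:UMD} (from Gupta and Ramdas): for any single bin $I$ of a single UMD, with probability at least $1-\alpha'$, $|\hat\mu_{I,s} - \mathbb{E}[\tilde Y \mid h(Q,A) \in I,\, \beta(Q,A)=s]| \leq \sqrt{\log(2/\alpha')/(2(b-1))}$. The $b-1$ (rather than $b$) arises because one order statistic per bin is consumed in defining the random bin boundary, leaving $b-1$ effectively independent draws conditional on the boundary. The total bin count satisfies $\sum_{s\in\gS} B_s \leq N/b$, so taking a union bound with $\alpha' = \alpha b / N$ yields the simultaneous bound $\sqrt{\log(2N/(b\alpha))/(2(b-1))}$ over all (partition, bin) pairs with probability at least $1-\alpha$. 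Since $g_{\text{UMD}}$ outputs a constant on each bin, the uniform bound on bin-level conditional means translates directly into the pointwise bound in Definition~\ref{def:cond}.

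Finally, I would transfer from $\tilde Y$ to $Y$ using Definition~\ref{dfn:misspecification}: the $\nu$-bound on $|\mathbb{E}[\tilde Y \mid h(Q,A)=p] - \mathbb{E}[Y \mid h(Q,A)=p]|$, once lifted to joint conditioning on $\beta(Q,A)=s$, shifts the bin-level conditional expectation by at most $\nu$. Combining the $\sqrt{\log(2N/(b\alpha))/(2(b-1))}$ sampling error with the $\nu$ misspecification error via the triangle inequality yields the stated $\epsilon$. The main obstacle is the rigorous treatment of the double dipping: the bin boundaries are random and depend on the very sample used to form $\hat\mu_{I,s}$, so one cannot apply Hoeffding naively and must instead condition on the order statistics defining the boundaries before applying Hoeffding to the remaining $b-1$ i.i.d.\ points within the bin. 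A secondary subtlety is that Definition~\ref{dfn:misspecification} is stated for conditioning on $h$ alone; to make the transfer airtight it would need to be read (or mildly strengthened) to hold jointly with $\beta$, which is the natural reading given the i.i.d.\ structure of the distribution in~\eqref{eqn:distribution}.
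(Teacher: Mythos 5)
Your proposal is correct and follows essentially the same route as the paper: a per-partition UMD concentration bound derived by conditioning on the order statistics that define the random bin boundaries, a Hoeffding argument on the remaining $b-1$ points, a union bound over all bins across all partitions (totaling at most $N/b$), and a triangle-inequality transfer from $\tilde Y$ to $Y$ via the misspecification constant $\nu$. You are also right that Definition~\ref{dfn:misspecification} is stated conditioning only on $h(Q,A)$ and needs to be read as holding jointly with $\beta(Q,A)=s$ for the per-partition application to be airtight --- a subtlety the paper's proof uses implicitly without calling out.
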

The proof is in Appendix~\ref{app:theory}.  The dependence on $\approx 1/\sqrt{b}$ factor comes because the Algorithm~\ref{alg:beta-binning-training} delegates at
least $b$ points to every bin.\! We now discuss how Theorem~\ref{main} is applicable for both QA binning and scaling QA binning with different $\nu$'s.

\noindent\textbf{Applying Theorem~\ref{main} to QA Binning \& Scaling QA Binning.} In our description of QA binning (Section~\ref{subsection:binning}), we assumed $t$ is set to the ground truth $y$ (in Algorithm~\ref{alg:beta-binning-training}), hence, by definition $\nu=0$.  Theorem~\ref{main} can also be used to choose $b$, see the plots for $\nu=0$ in Figure~\ref{fig:choosing_b}.

If the true labels are not available, then we can still use QA binning say by using an LM to produce proxy ground truth. In this case, the misspecification constant $\nu$ depends on the data generating process of misspecified labels. When an LM is used to produce proxy ground truth, if there is a hold-out set containing the ground truth, then a bound on $\nu$ can be estimated empirically.

In scaling QA binning, where $\tilde{y}$ is set to be the fitted values of a hierarchical logistic regression model, the magnitude of misspecification factor $\nu$ depends on the goodness-of-fit of the fitted values.  In practice, we can estimate $\nu$ empirically using a hold-out dataset.  This estimate can be used to choose $b$.  For some levels of $\epsilon$, the same $\epsilon$ as in the case of $\nu=0$ can be attained by setting $b$ to be a higher number.  In our kd-tree instantiation, this amounts to using a smaller maximum depth hyperparameter.

\begin{figure}[t]
\centering        
\floatbox[{\capbeside\thisfloatsetup{capbesideposition={right,top},capbesidewidth=7cm}}]{figure}[\FBwidth]
{\caption{The relationship between $\epsilon$, number of points per bin $b$ and misspecification constant $\nu$ in Theorem~\ref{main}. Based on the plot, when $\nu=0$, practitioners should set $b\simeq 300$ when $N=1000$, $b\simeq 400$  when $N=5000$, $b\simeq 500$ when $N=20000$ (attaining $\epsilon=0.1$).  When a ground truth proxy is misspecified (Definition~\ref{dfn:misspecification}), e.g., $\nu=0.1$, for certain levels of $\epsilon$, the same bound can be attained with a larger $b$.  For example, for achieving the same $\epsilon=0.15$, if  $\nu=0$ then $b$ needs to be only approximately 250, whereas if $\nu=0.1$ then $b$ has to be $>1000$.}\label{fig:choosing_b}}
{\includegraphics[width=0.44\textwidth]{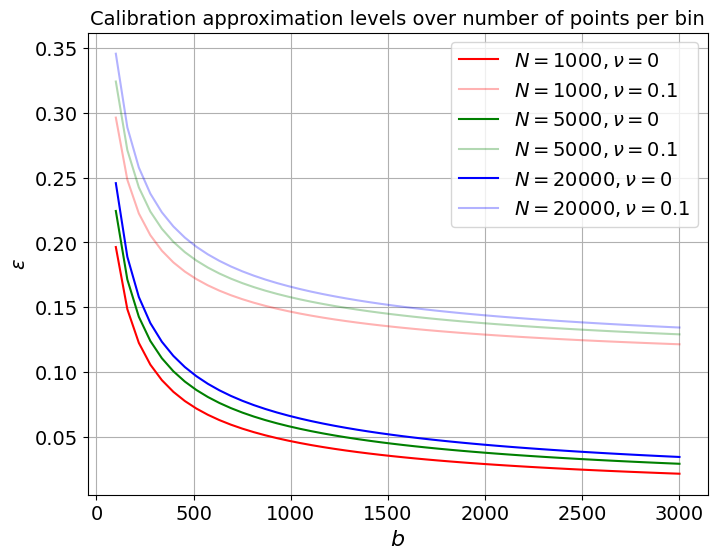}}
        
\end{figure}

\section{Related Work} \label{sec:related}

\noindent\textbf{Calibration for Language Models.}
Reinforcement learning from human feedback objective may prioritize adherence to user instructions in dialogue over producing well-calibrated predictions.~\citep{kadavath2022language}.~\citet{lin2022teaching} introduced the concept of verbalized confidence that prompts LMs to express confidence directly, focusing on fine-tuning, instead of zero-shot verbalized confidence.~\citet{mielke2022reducing} uses an external calibrator for a white-box large language model.  Other methods use consistency measures to improve LM calibration~\citep{lyu2024calibrating}. Our experimental setup closely relates to recent works in LM confidence elicitation~\citep{tian2023just, xiong2023can}. These methods lack novel posthoc calibrators and do not offer the rigorous calibration guarantees that ours provide. Calibration has been shown to impact selective QA performance~\cite{kamath2020selective}, but they focus on uncertainty quantification and assumes that the LM allows access to the model likelihood.

\noindent\textbf{Group Notions of Calibration.} Previous works highlight the limitations of average-case calibration.
Group-wise calibration, which uses predefined groupings~\citep{kleinberg2016inherent, pleiss2017fairness}, has been adapted for language models (LMs).~\citet{li2024few} train a model that approximates the precision-threshold curve for a given group by using few-shot samples to predict the LM’s empirical precision at various confidence thresholds.~\citet{ulmer2024calibrating} train an auxiliary model using accuracy per group as target to predict an LM's confidence based on textual input and output.~\citet{detommaso2024multicalibration} achieves multicalibration ---  simultaneous calibration across various intersecting groupings of the data. Our work complements multicalibration, and our methods could extend to this by adapting Algorithm 3 in~\citet{detommaso2024multicalibration}. ~\citet{luo2022local} measure calibration over a set of similar predictions, quantified by a kernel function on feature space. Again, the notions of calibrations and their guarantees are incomparable. 

\noindent\textbf{Other Metrics for Measuring Calibration Error.} Brier score~\citep{brier1950verification} measures the accuracy of probabilistic predictions but while it can be decomposed into calibration and refinement~\citep{blattenberger1985separating}, it doesn't directly assess calibration. As a result, a model with lower squared error may still be less well-calibrated. Maximum Calibration Error (MCE) examines the maximum miscalibration across confidence bins~\citep{guo2017calibration}, but in a QA setting, it faces the same issues as calibration error ($\CE(h)$), as shown in Example~\ref{ex:example_1}. We generalize MCE to the QA-calibration setting in Appendix~\ref{app:beta-mce}. Through QA-calibration, we present a principled and interpretable calibration target for QA settings.

\section{Experiments}
\label{sec:experiments}

\noindent\textbf{Datasets, Models, and Prompts.} 
We use 5 QA datasets: TriviaQA~\citep{joshi2017TriviaQA}, SciQ~\citep{welbl2017crowdsourcing},  BigBench~\citep{srivastava2022beyond}, OpenBookQA~\citep{mihaylov2018can}, and MMLU~\citep{hendrycks2020measuring}  (see Table~\ref{tbl:datasets} for more details). We use two performant models: Mistral~\citep{jiang2023Mistral} and Gemma~\citep{team2024Gemma}. To elicit confidence scores, we use two prompt techniques recently suggested in literature: Verb1S-Top1 \& Ling1S-Top1 from~\citet{tian2023just}. See Table~\ref{tbl:prompts_used}, (Appendix~\ref{app:details}) for details about the prompts. Central to the implementation of posthoc calibration and evaluation of calibration is the availability of a label for a question-and-answer pair --- specifically, whether the answer provided by the LM is accurate for the question posed by the user. In practice, a common idea to generate this label is to take an LM provided \textit{answer}, and then use another LM to assess whether the proposed answer is semantically equivalent to the true (ground truth) answer~\citep{tian2023just}.  To construct the ground truth proxy for $y$, we use Llama 3.1~\citep{dubey2024llama}. See Appendix~\ref{app:examples-beta-partition} for qualitative examples of question-and-answer pair kd-tree partition assignments.

\noindent\textbf{Our Methods.} We compare the performance of our calibrators: QA binning (QAB) from Subsection~\ref{subsection:binning} and hierarchical scaling QA binning (HS-QAB) from Subsection~\ref{subsection:scaling-binning}. We also include a fully pooled version of scaling QA binning (S-QAB), by setting $s[i]$ to a constant (thus, one partition) in~\eqref{eqn:vanilla_likelihood}. To set the hyperparameter minimum number of points per bin $b$ (Algorithm~\ref{alg:beta-binning-training}), we set an $\epsilon$ that is not too large as per Figure~\ref{fig:choosing_b} and use root finding with the $\epsilon$ expression in Theorem~\ref{main} to choose $b$.  We then search over a range of $b$'s by allowing for a misspecification range between $0$ and $0.05$ and a range of maximum kd-tree depths depending on the size of the dataset such that each partition admits a 3–10 bins.  To set $B$ in UMD, we follow the guidelines in~\cite{gupta2021distribution}.  Note that their bound does not involve misspecification factor.

\noindent\textbf{Baselines.} We consider the following baselines: no recalibration (None) which returns state-of-the-art elicited confidence scores~\citep{tian2023just}, histogram binning (UMD)~\citep{gupta2021distribution}, Platt scaling (S)~\citep{platt1999probabilistic}, and scaling-binning (S-B)~\citep{kumar2019verified}. These baselines consist of the state-of-the-art ideas in posthoc calibration.  Note that the techniques UMD, S, and S-B, aim to minimize the expected calibration error $\CE$ (Definition~\ref{defn:ece}) and do not take the partitions induced by $\beta$ into account.

\noindent\textbf{Metrics.} In this section, we primarily use two metrics for comparison. First, is the QA-calibration error $\text{CE}(h;\beta)$ (Definition~\ref{def:betaCE}), which is the metric that our methods (presented in Section~\ref{sec:schemes}) optimize for. A lower $\text{CE}(h;\beta)$ indicates a more effective scheme for achieving QA-calibration. As explained in Section~\ref{sec:setup}, QA-calibration error generalizes the expected calibration error ($\text{CE}(h)$). Second, to measure the downstream impact of using QA-calibration as the confidence calibration notion in a QA setting,  we adapt selective question answering~\citep{kamath2020selective} to our setting: given a threshold $\gamma \in [0, 1]$, the answer $a$ is returned for a question $q$ to the end user if $h(q, a) \geq \tau$, and abstains (no answer is returned) otherwise.\footnote{For example, when $\tau=0$, all answers are returned irrespective of the confidence score.} We adapt the area under risk-coverage curve, which is a standard way to evaluate selective prediction methods~\citep{el2010foundations} to our setting. Our posthoc schemes produce discretized confidence scores $h$, often with large number of ties, which lead to unreliable risk-coverage calculations.
Therefore, we measure the area under the accuracy-confidence curve ($\AUAC$) by sampling grid points from 0 to 1, computing the accuracy (based on ground truth $y$) of points with confidence greater than each grid point. A higher $\AUAC$ indicates a more effective selective QA scheme.

\begin{table}[!ht]
\begin{center}
 \begin{small}

\adjustbox{max width=\textwidth}
{

 \begin{tabular}{llllrr} 
 \toprule
Dataset & Prompt & LLM & Calibrator & $\CE(h;\beta)$ & $\AUAC$ \\
 \midrule

MMLU & Ling1s-Top1 & Mistral & QAB (\textbf{ours}) & $0.171 \pm 0.005$ & $0.20 \pm 0.023$ \\
MMLU & Ling1s-Top1 & Mistral & HS-QAB (\textbf{ours}) & $\mathbf{0.16 \pm 0.005}$ & $\mathbf{0.269 \pm 0.022}$ \\
MMLU & Ling1s-Top1 & Mistral & S-QAB (\textbf{ours}) & $0.163 \pm 0.003$ & $0.19 \pm 0.045$ \\
MMLU & Ling1s-Top1 & Mistral & S-B & $0.393 \pm 0.004$ & $0.141 \pm 0.003$ \\
MMLU & Ling1s-Top1 & Mistral & S & $0.249 \pm 0.005$ & $0.122 \pm 0.002$ \\
MMLU & Ling1s-Top1 & Mistral & B & $0.392 \pm 0.003$ & $0.139 \pm 0.007$ \\
MMLU & Ling1s-Top1 & Mistral & None & $0.532 \pm 0.008$ & $\mathbf{0.269 \pm 0.007}$ \\
\midrule
MMLU & Ling1s-Top1 & Gemma & QAB (\textbf{ours})& $0.232 \pm 0.005$ & $0.211 \pm 0.032$ \\
MMLU & Ling1s-Top1 & Gemma & HS-QAB (\textbf{ours})& $0.182 \pm 0.005$ & $0.26 \pm 0.01$ \\

MMLU & Ling1s-Top1 & Gemma & S-QAB (\textbf{ours})& $\mathbf{0.181 \pm 0.009}$ & $\mathbf{0.275 \pm 0.029}$ \\
MMLU & Ling1s-Top1 & Gemma & S-B & $0.382 \pm 0.006$ & $0.184 \pm 0.002$ \\
MMLU & Ling1s-Top1 & Gemma & S & $0.201 \pm 0.009$ & $0.19 \pm 0.003$ \\
MMLU & Ling1s-Top1 & Gemma & B & $0.385 \pm 0.003$ & $0.183 \pm 0.008$ \\
MMLU & Ling1s-Top1 & Gemma & None & $0.603 \pm 0.006$ & $0.249 \pm 0.01$ \\
\midrule
MMLU & Verb1s-Top1 & Mistral & QAB (\textbf{ours}) & $0.197 \pm 0.004$ & $0.198 \pm 0.046$ \\
MMLU & Verb1s-Top1 & Mistral & HS-QAB (\textbf{ours}) & $\mathbf{0.149 \pm 0.004}$ & $\mathbf{0.306 \pm 0.035}$ \\

MMLU & Verb1s-Top1 & Mistral & S-QAB (\textbf{ours})& $0.16 \pm 0.004$ & $0.217 \pm 0.057$ \\
MMLU & Verb1s-Top1 & Mistral & S-B & $0.362 \pm 0.005$ & $0.139 \pm 0.004$ \\
MMLU & Verb1s-Top1 & Mistral & S & $0.258 \pm 0.005$ & $0.126 \pm 0.001$ \\
MMLU & Verb1s-Top1 & Mistral & B & $0.352 \pm 0.004$ & $0.132 \pm 0.011$ \\
MMLU & Verb1s-Top1 & Mistral & None & $0.639 \pm 0.006$ & $0.255 \pm 0.008$ \\
\midrule
MMLU & Verb1s-Top1 & Gemma & QAB (\textbf{ours}) & $0.197 \pm 0.006$ & $\mathbf{0.345 \pm 0.035}$ \\
MMLU & Verb1s-Top1 & Gemma & HS-QAB (\textbf{ours})& $\mathbf{0.151 \pm 0.006}$ & $0.3 \pm 0.035$ \\
MMLU & Verb1s-Top1 & Gemma & S-QAB  (\textbf{ours})& $0.161 \pm 0.006$ & $0.227 \pm 0.074$ \\
MMLU & Verb1s-Top1 & Gemma & S-B & $0.361 \pm 0.007$ & $0.133 \pm 0.005$ \\
MMLU & Verb1s-Top1 & Gemma & S & $0.228 \pm 0.005$ & $0.137 \pm 0.004$ \\
MMLU & Verb1s-Top1 & Gemma & B & $0.352 \pm 0.004$ & $0.144 \pm 0.011$ \\
MMLU & Verb1s-Top1 & Gemma & None & $0.636 \pm 0.007$ & $0.258 \pm 0.006$ \\
\bottomrule
\end{tabular}
}
\caption{Performance of our QA-calibration methods, QA binning (QAB), Pooled scaling QA binning (S-QAB) and Hierarchical scaling QA binning (HS-QAB)), compared to the baselines, UMD, (B, ~\citep{gupta2021distribution}), Platt Scaling (S,~\citep{platt1999probabilistic}), and Scaling-binning (S-B,~\citep{kumar2019verified}), and prompt-based approach None~\citep{tian2023just}
\label{tbl:results_full_dataset_main_paper}. The bold entry in the columns for $\CE(h;\beta)$ and $\AUAC$ represent the best performing schemes for that metric. However, note that in some case the confidence intervals overlap, especially between our schemes.} 
\end{small}
 \end{center}
\end{table}

\noindent\textbf{Training.} We perform a 4-way (20:60:10:10) split of each dataset: the first is used to construct the kd-tree, second is used for posthoc calibration training, third is used for hyperparameter tuning and fourth is for testing.  We find it crucial to optimize for $\AUAC$ during hyperparameter tuning as our schemes already aim to minimize $\CE(h;\beta)$, obtaining the appropriate maximum kd-tree depth $d$ and binning parameters $b$ and $B$. Missing experimental details are presented in Appendix~\ref{app:details}. 

\noindent\textbf{Results.} Table~\ref{tbl:results_full_dataset_main_paper} shows the performance of the posthoc calibrators on MMLU and BigBench datasets. More results are provided in Tables~\ref{tbl:results_full_dataset1} and~\ref{tbl:results_full_dataset2}.  Our methods (QAB, HS-QAB, and S-QAB) generally achieve the best QA-calibration error $\CE(h;\beta)$ and best area under the accuracy-confidence curve ($\AUAC$). While the first result, in itself, may not be surprising as our proposed schemes aim to minimize $\CE(h;\beta)$, the gap between our techniques and baselines is significantly huge. For example, notice the difference in the calibration error when just using the SOTA confidence elicitation prompts (None) from~\citep{tian2023just} vs.\ our schemes in Table 2. Among our schemes, HS-QAB generally performs best. This is because a parametric model (especially a partially pooled model like hierarchical scaling) helps reduce the variance of the downstream binning averages.

For selective QA, we again notice that our proposed schemes consistently outperforms the baselines. In some cases, the underlying LLM using confidence elicitation prompts (None) is performant in selective answering and attains high $\AUAC$ (similar results using different LLMs were noted   by~\citet{tian2023just}), but are not well-calibrated as demonstrated by their high QA-calibration score. Our schemes, S-QAB and HS-QAB are generally the top-two performing schemes for this task with comparable and in many cases better $\AUAC$ scores than the None scheme.  Since the accuracy-confidence curve is generated by examining the accuracy of answers above a confidence threshold, the results demonstrate the desirable quality that the confidence scores provided by our proposed schemes are better at ranking accurate answers higher than inaccurate ones. Crucially, the lower performance of other baselines (S-B, S, and B) demonstrates the advantage of QA-calibration. In particular, our QA-calibration framework identifies the optimal kd-tree depth, which empirically is never equal to zero (standard average-case calibration). We observe similar results in Appendix~\ref{app:varying-beta} when $\beta$ is varied.  Lastly, we analyze the shift in confidence scores after posthoc calibration in Appendix~\ref{app:scatterplot_of_recalibrated_confidence_scores_against_baselines} and per-partition reliability in Appendix~\ref{app:reliability_plot_of_unpartitioned_partitioned_data}.

\vspace*{-2ex}
\section{Conclusions} \label{sec:conslusion}
We proposed QA-calibration, a new notion of calibration which conditions on groups of QA pairs. We propose two new posthoc calibration schemes for  LM-elicited confidence scores. Our algorithms are effective on various QA datasets. For future work, we plan to investigate alternative notions of calibration for groups, such as multi-calibration.

\noindent\textbf{Limitations.} The interpretability of the calibration guarantee for the user largely depends on the choice of $\beta$ --- if users want the partitions to be very fine- or coarse-grained, then $\beta$ must be built with the appropriate depth.  Furthermore, our algorithms assume that the output space of $\beta$ is fixed, which may be a limited assumption, given that the ``information'' space of generative QA may increase indefinitely over time.

\section*{Acknowledgements}
P.M was funded in part by the Hybrid Intelligence Center, a 10-year programme funded by the Dutch Ministry of Education, Culture and Science through the Netherlands Organisation for Scientific Research with grant number 024.004.022.

\bibliography{papers}
\bibliographystyle{iclr2025_conference}

\newpage
\appendix

\begin{table}[!htbp]
\centering
\caption{Important variables. Note that in our mathematical description, we treat the $k=3$ output answers and confidences as $3$ binary instances. Given a prompt function $m()$ and question $q$, at the end of an LM's interaction, we obtain answers and confidences.}

 \label{tbl:important_variables}
\adjustbox{max width=\textwidth}{
 \begin{tabular}{  p{0.15\textwidth}  p{0.12\textwidth}  p{0.5\textwidth}  p{0.3\textwidth} } 
 \toprule
 Name & Notation & Description & Example \\ \midrule
 Question & $q$ & Inputted question. & What happens to you if you eat watermelon seeds? \\ \midrule
 Ground truth & $a^{gt}$ & Intended output answer. Note that we omit this in our mathematical description and work directly with the ground truth label $y$.  In practice we use $a^{gt}$ to construct the ground truth proxy. & The watermelon seeds pass through your digestive system \\ \midrule
 Prompting function & $m()$ & A function that converts the question into a specific form by inserting the question $q$. The example prompting function is Verb. 1S top-$k$ in~\citet{tian2023just}, with $k=3$. We denote the filled (by the question) prompt as $m(q)$. Note that multiple prompts may be generated (see the 2S prompts in ~\citet{tian2023just}) &  Provide your three best guesses and the probability that each is correct (0.0 to 1.0) for the following question. Give ONLY the guesses and probabilities, no other words or explanation. For example: G1: 
 \textless first most likely guess, as short as
possible; not a complete sentence, just the guess! \textgreater P1: \textless the probability between
0.0 and 1.0 that G1 is correct, without any extra commentary whatsoever; just
the probability!\textgreater ... G3: \textless third most likely guess, as short as possible;
not a complete sentence, just the guess!\textgreater P3: \textless the probability between 0.0
and 1.0 that G3 is correct, without any extra commentary whatsoever; just the
probability!\textgreater. The question is: \textless $q$\textgreater

\\ \midrule
 Answering function & $c(q)$ & An answering function $c(q) = c(m(q))$, that models the pipeline, that takes as input $m(q)$ and outputs an answer $a$.  This is invoked $k$ times to obtain $k$ answers. The function subsumes the postprocessing performed on the LM response to obtain answer $a$. & Implicitly-defined in the LM interaction\\ 

 \midrule
  Answers & $(a^1,a^2,a^3)$ & Three sampled answers, obtained after text normalization of the LLM raw output. &  (nothing, grow watermelon, stomachache) \\ \midrule
 Confidence function & $h(q, a)$ & A confidence function $h(q, a) = h(m(q),a)$, that models the pipeline, that takes as input $m(q)$ and answer $a$ and outputs the confidence that the answer $a$ is correct for question $q$.  This is invoked $k$ times to obtain $k$ confidences. The function subsumes the postprocessing performed on the LM response to obtain a float confidence value. & Implicitly-defined in the LM interaction \\ \midrule
 Confidence values & $(h^1,h^2,h^3)$ & Three confidence values associated with the three answers. Note that they may not be normalized. & (0.95, 0,05, 0.2) \\ \midrule
 Ground truth proxy (proxy of $y$) & $g(q, a, a^{gt})$ & The returned truth value is postprocessed to map Yes/No to 1/0.  In practice, this is used as a proxy ground truth. We also experimented with the query (checking for semantic equivalence) from~\citet{tian2023just}, but we observe a high false negative rate. & We query an LLM using the following prompt: Do following two answers to my
question Q agree with each other? Q:
\textless$q$\textgreater, A1: \textless$a^1$\textgreater, A2:
\textless$a^2$\textgreater. Please answer with a
single word, either “Yes." or “No.", and
explain your reasoning. \\ \bottomrule
 \end{tabular} 
}
\end{table}

\section{Additional Details for Section~\ref{sec:schemes}} \label{app:schemes}

\subsection{Limitations of Non-discretized Methods}
\label{appendix:discretized_probability}
Estimation of $\mathbb{E}[Y|H]$ when $H = h(Q,A)$ is continuous is a difficult task~\citep{kumar2019verified}.  The confidence must also be discretized for it to achieve guarantees of marginal calibration~\citep{gupta2021distribution}.  Prompts used to elicit confidence~\citep{tian2023just, xiong2023can} are not guaranteed to induce a discretized confidence score (both in the average-case- and QA-calibration cases). 

In practice, the estimation of $\CE(h;\beta)$ is done by partitioning $\gD$ according to $\beta$: $\gD=\cup_{s=1}^M \gD_s$, where $\gD_s=\{ (q_i,a_i,h_iy_i):\beta(q_i,a_i)=s \}$, and taking the weighted average of $\CE(h)$'s (from~\eqref{eqn:lp-ece}) from each $\gD_s$ with weight $|\gD_s|$.  For each $\gD_s$, $H$ is typically binned into intervals, and the calibration error in each bin is estimated as the difference between the average of confidence values and labels in that bin.

Lastly, estimation of $\CE(h)$ for non-discretized methods may involve binning and can underestimate the error (Proposition 3.3 in~\citep{kumar2019verified}).  In our case, this is a possibility when comparing scaling against the other approaches.

\subsection{Uniform-mass-double-dipping Histogram Binning (UMD)}

We adapt UMD (Algorithm 1 from ~\citet{gupta2021distribution}) to our notation in Algorithm~\ref{alg:UMD}. UMD takes as input calibration data $\mathcal{D}=\{(h_i,t_i)\}_{i\in [N]}$, where $h_i$'s are (uncalibrated) confidence scores and $t_i$'s are the corresponding target labels. The function order-stats returns ordered confidence scores $(h_{(1)},\ldots, h_{(n)}$ where $h_{(1)} < h_{(2)} < \ldots < h_{(n)}$.

\renewcommand{\algorithmicrequire}{\textbf{Input:}}
\renewcommand{\algorithmicensure}{\textbf{Output:}}

\begin{algorithm}[h!]
 \caption{UMD}
 \label{alg:UMD}
 \definecolor{commentgray}{rgb}{0.25, 0.25, 0.25}
 \begin{algorithmic}[1]
  \REQUIRE Number of bins $B$, Calibration data $\mathcal{D}=\{(h_i,t_i)\}_{i\in [N]}$ (dataset cardinality $n$ depends on the inputted dataset, we refer to the dataset within the algorithm as $\gD$.) \\
  \ENSURE Calibrator function $g_{\text{UMD}}:[0,1]\rightarrow [0,1]$
  \STATE{$(h_{(1)},\ldots, h_{(n)}) \leftarrow \text{order-stats}(h_1,\ldots,h_n)$}
  \STATE{$(t_{(1)},\ldots, t_{(n)} \leftarrow (t_1,\ldots,t_n) \text{ ordered as per the ordering of } (h_{(1)},\ldots, h_{(n)})$}
  \STATE{$\Delta \leftarrow \frac{(n+1)}{B}$}
  \STATE{$\hat{\Pi} \leftarrow \text{ empty array of size } B$}
  \STATE{$A \leftarrow 0\text{-indexed array}([0,\lceil\Delta\rceil, \lceil2\Delta\rceil), \ldots, n+1]$}
  \FOR{$b\leftarrow 1 \text{ to } B$}
    \STATE{$l \leftarrow A_{b-1}$}  \STATE{$u \leftarrow A_b$}
    \STATE{$\hat{\Pi}_b \leftarrow\text{mean}(t_{(l+1)} \ldots, t_{(u-1)})$}
  \ENDFOR
\STATE{$(h_{(0)},h_{(n+1)}) \leftarrow (0,1)$}
 \STATE Define the calibrator function, $g_{\text{UMD}}$: {$g_{\text{UMD}} (h_{\text{test}}) = \sum_{b=1}^B \mathbf{1}\{ h_{(A_{b-1})} \leq h_{\text{test}} < h_{(A_b)} \} \hat{\Pi}_b$}
  \RETURN{$g_{\text{UMD}}$}

 \end{algorithmic}
\end{algorithm}

\subsection{Distribution-free Guarantees} \label{app:theory}

In~\citet{gupta2020distribution}, UMD procedure (Algorithm~\ref{alg:UMD}) is assumed to take ground truth $y$ as input ($t=y$).  Since in our setting, it is possible to pass a proxy ground truth, we now describe a generalization of conditional calibration guarantees of UMD procedure~\cite[Theorem 3]{gupta2021distribution}.

\begin{theorem}[Conditional Calibration Guarantee of Algorithm~\ref{alg:UMD} under Label Misspecification] 
\label{thm:df_umd} 
Consider an input calibration dataset $\tilde{D}$ defined in Section~\ref{sec:theory}  with misspecification factor $\nu$ from Definition~\ref{dfn:misspecification}. Assume that the $h_i$'s are distinct and $N \geq 2B$. Then calibrator $g_{\text{UMD}}$ outputted by Algorithm~\ref{alg:UMD}, with input $\gD=\tilde{D}$, is $(\epsilon,\alpha)$-conditionally calibrated for any $\alpha\in(0,1)$, with
\begin{align}
 \epsilon = \sqrt{\frac{\log(2B/\alpha)}{2(\lfloor N/B \rfloor-1)}}+\nu.
\end{align}

\end{theorem}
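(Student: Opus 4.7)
The plan is to reduce this to the conditional calibration guarantee for UMD under exact labels~\citep{gupta2021distribution}, plus a pointwise misspecification correction supplied by Definition~\ref{dfn:misspecification}. Concretely, I would decompose, for each bin $b\in[B]$ produced by Algorithm~\ref{alg:UMD} on $\tilde D$,
\begin{align*}
\bigl|\hat\Pi_b - \E[Y \mid h(Q,A)\in \mathrm{bin}_b]\bigr|
&\le \underbrace{\bigl|\hat\Pi_b - \E[\tilde Y \mid h(Q,A)\in \mathrm{bin}_b]\bigr|}_{\text{(I): concentration of bin average}} \\
&\quad + \underbrace{\bigl|\E[\tilde Y \mid h(Q,A)\in \mathrm{bin}_b] - \E[Y \mid h(Q,A)\in \mathrm{bin}_b]\bigr|}_{\text{(II): misspecification}}.
\end{align*}
Bounding (II) by $\nu$ is immediate: Definition~\ref{dfn:misspecification} gives $|\E[\tilde Y \mid h(Q,A)=p] - \E[Y\mid h(Q,A)=p]|\le \nu$ for every $p$, and applying the tower property over the event $\{h(Q,A)\in\mathrm{bin}_b\}$ preserves this bound.

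For term (I), I would invoke Theorem~3 of~\citet{gupta2021distribution} for UMD with calibration dataset $\tilde D$ and target labels $\tilde y_i\in[0,1]$. Their argument handles the double-dipping by conditioning on the order statistics of the $h_i$'s: once the bin boundaries (which are themselves $h_{(A_b)}$'s) are fixed, the remaining points in each bin are exchangeable, each bin contains exactly $\lfloor N/B\rfloor - 1$ usable points, and Hoeffding's inequality applied to $\tilde Y\in[0,1]$ (not just $\{0,1\}$) yields
\begin{equation*}
\Pr\Bigl(\bigl|\hat\Pi_b - \E[\tilde Y\mid h(Q,A)\in\mathrm{bin}_b]\bigr|> t\Bigr)\le 2\exp\bigl(-2(\lfloor N/B\rfloor-1) t^2\bigr).
\end{equation*}
A union bound across the $B$ bins with $t=\sqrt{\log(2B/\alpha)/(2(\lfloor N/B\rfloor-1))}$ controls (I) uniformly over $b$ with probability at least $1-\alpha$. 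The assumption $N\ge 2B$ ensures $\lfloor N/B\rfloor-1\ge 1$ so the bound is non-vacuous, and the assumption that the $h_i$'s are distinct ensures the UMD boundaries are well-defined (no ties to break beyond $\delta$).

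Combining the two displays by the triangle inequality and recalling that the range of $g_{\text{UMD}}$ is exactly $\{\hat\Pi_1,\dots,\hat\Pi_B\}$ with preimages $\mathrm{bin}_1,\dots,\mathrm{bin}_B$, we obtain, on the same high-probability event, $|\E[Y\mid g_{\text{UMD}}(h(Q,A))=\hat\Pi_b]-\hat\Pi_b|\le \sqrt{\log(2B/\alpha)/(2(\lfloor N/B\rfloor-1))}+\nu$ simultaneously for all $b$, which is exactly $(\epsilon,\alpha)$-conditional calibration with the stated $\epsilon$.

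The only nontrivial step is (I); the main subtlety is that the UMD bin boundaries are data-dependent, so one cannot naively apply Hoeffding marginally. The standard resolution (and what~\citet{gupta2021distribution} use) is the conditioning-on-order-statistics trick sketched above; I would cite their Theorem~3 rather than re-deriving it. The novelty here is purely the additive $+\nu$ term, which drops out cleanly because Definition~\ref{dfn:misspecification} is pointwise in $p$ and because Hoeffding's inequality applies to $[0,1]$-valued $\tilde Y$ with no change to the exponent.
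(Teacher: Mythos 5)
Your proposal is correct and follows essentially the same route as the paper: decompose the per-bin error into a concentration term (handled by the Gupta--Ramdas conditioning-on-order-statistics argument plus Hoeffding and a union bound over the $B$ bins) and a misspecification term bounded by $\nu$ via the tower property. If anything, your explicit remark that Hoeffding applies to $[0,1]$-valued $\tilde Y$ is slightly more careful than the paper's phrasing, which loosely calls the within-bin $\tilde y$'s Bernoulli.
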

\begin{proof}
For $b\in\{0,1,\ldots,B\}$, define $k_b=\lceil b(N+1/B) \rceil$.  Fix $h_{(0)} := 0$ and $h_{(N+1)} := 1$ as the smallest and largest order statistics, respectively.  As per Algorithm~\ref{alg:UMD}, we compute the order statistics of the input data and gather the following points into a set: $\gM := \{h_{(k_1)}, \ldots, h_{k_{(B-1)}}\}$.

Let $\gB: [0,1] \rightarrow [B]$ be the binning function: $\gB(H) = b \iff h_{(k_{b-1})} \leq g_{\text{UMD}}(H) < h_{(k_b)}$.  Given $\gM$, the function $\gB$ is deterministic, i.e., for every $b\in [B]$, $\mathbb{E}[\tilde{Y} | \gB(H) = b]$ is deterministic.

Consider some $b\in\{0,1,\ldots,B\}$ and denote $l=k_{b-1},u=k_b$.  By Lemma 2 from~\citep{gupta2021distribution}, the unordered (denoted by $h_{\{\cdot\}}$) confidence scores $h_{\{l+1\}},h_{\{l+2\}},\ldots,h_{\{u+1\}}$ are i.i.d given $\gM$, with the same conditional distribution as that of $H$ given $\gB(H)=b$. Therefore, $\tilde{y}_{\{l+1\}},\tilde{y}_{\{l+2\}},\ldots,\tilde{y}_{\{u-1\}}$ are i.i.d given $\gS$, with the conditional distribution $\text{Bernoulli}(\mathbb{E}[\tilde{Y} \mid \gB(H)=b]$.

We now show that $\hat{\Pi}_b$ (defined in Line 9 and returned by $g_{\text{UMD}}$ in Line 12 in Algorithm~\ref{alg:UMD}), the average of the $\tilde{y}$ in a bin $b$ values, concentrates around $\mathbb{E}[\tilde{Y} \mid \gB(H)=b]$. For any $\gamma \in (0,1)$, by Hoeffding's inequality, w.p. at least $1-\gamma $:

\begin{align}
\label{eqn:hoeffding_per_bin}
\left| \mathbb{E}[\tilde{Y} | \gB(H) = b] - \hat{\Pi}_b \right| \leq \sqrt{\frac{\log(2/\gamma)}{2\lfloor u - l - 1\rfloor}}
\leq \sqrt{\frac{\log(2/\gamma)}{2(\lfloor N/B \rfloor -1 )}},
\end{align}
where the second inequality holds since for any $b$, $u-l=\lfloor (b+1)(N+1)/B \rfloor - \lfloor b(N+1)/B \rfloor \geq \lfloor N/B \rfloor$.  Using law of total probability
(partitioning $\{\gB(h) = b\}$ to $\{h: \gB(h) = b \}$), we obtain:
\begin{align*}
\mathbb{E}[\tilde{Y}\mid \gB(H) = b] \leq \left| \mathbb{E}[Y \mid \gB(H) = b] - \nu \right|,
\end{align*}
which implies that 
\begin{align}
\label{eqn:concentration_with_bias}
\left| \mathbb{E}[Y \mid \gB(H) = b] - \hat{\Pi}_b \right|
\leq \sqrt{\frac{\log(2/\gamma)}{2(\lfloor N/B \rfloor -1 )}} + \nu.
\end{align}

Set $\gamma=\alpha/B$ in~\eqref{eqn:concentration_with_bias}, and take a union bound over all $b\in B$.  With probability at least $1-\alpha$, for every $b\in B$, $\left| \mathbb{E}[Y|\gB(H)=b] - \hat{\Pi}_b \right| \leq \epsilon$, where $\epsilon$ is the RHS of~\eqref{eqn:concentration_with_bias}. This implies that:
\begin{align*}
\left| \mathbb{E}[Y\mid H] - H \right| 
&= \left| \mathbb{E}[\mathbb{E} [Y \mid \gB(H), H]\mid H] - H \right| \\
&= \left| \mathbb{E}[\mathbb{E} [Y \mid \gB(H)]\mid H] - H \right| \\
&= \left| \mathbb{E}[\mathbb{E} [Y \mid \gB(H)] - H \mid H\right| \\
&= \left| \mathbb{E}[\mathbb{E} [Y \mid \gB(H)] - \hat{\Pi}_{\gB(H)} \mid H\right| \\
&\leq \mathbb{E} \left[ \left| \mathbb{E}[Y \mid \gB(H)] - \hat{\Pi}_{\gB(H)}  \right| \mid H \right] \\ 
&\leq \epsilon,
\end{align*}

Where the first equality is due to the law of total expectation, the fourth equality is by the definition of the quantity returned by $g_{
\text{UMD}}$, and the first inequality is due to Jensen's inequality.
This completes the proof, showing that $g_{\text{UMD}}$ is $(\epsilon, \alpha)$-conditionally calibrated for any $\alpha \in (0,1)$.
\end{proof}

\approximation*
\label{thm:main_proof}
\begin{proof}
For $s\in \gS$, let $P_s$ denote the distribution of $(h,\tilde{y})$ conditional on $\beta(q,a)=s$. The tuples in $\gD_s$ are i.i.d. samples from $P_s$ and $g_s=\gG[s]$ is the corresponding fitted UMD calibrator. The number of bins is $B_s=\lfloor n_s / b \rfloor$. Since, $n_s \geq b\lfloor n_s/ b \rfloor \geq 2B_s$, 

Let $B=\sum_{s=1}^S B_s$ and $\alpha_s=\alpha B_s / B$.  Note that $B \leq \sum_{s\in \gS} n_s/b = N/b$. We apply the bound in Theorem~\ref{thm:df_umd} to obtain:

\begin{align*}
&\sP\left(\forall p \in \text{range}(g_s), \left| \mathbb{E}\left[\tilde{Y}| \beta(q,a)=s,g_s(h)=p \right] -p \right| \leq \sqrt{\frac{\log(2B_s/\alpha)}{2(\lfloor n_s/B_s \rfloor-1)}} + \nu ~ \Bigg | ~\beta(q,a)=s \right) \\ 
&\geq 1-\alpha_s. 
\end{align*}
Note that 

\begin{align*}
\sqrt{\frac{\log (2B_s/\alpha_s)}{2(\lfloor n_s/B_s \rfloor-1)}} =& \sqrt{\frac{\log (2B/\alpha)}{2(\lfloor n_s/B_s \rfloor-1)}} 
\leq  \sqrt{\frac{\log (2N/b\alpha)}{2(\lfloor n_s/B_s \rfloor-1)}} 
\leq  \sqrt{\frac{\log (2N/b\alpha)}{2(b-1)}}. 
\end{align*}
For every $s \in \gS$:
\begin{align*}
\sP\left(\forall p \in \text{range}(g_s), \left| \mathbb{E}\left[\tilde{Y}| \beta(q,a)=s,g_s(h)=p\right] -p \right| \leq \epsilon ~ \Bigg | ~ \beta(q,a)=s \right) \geq 1-\alpha_s. 
\end{align*}
Taking union bound over $S$ gives:
\begin{align*}
\sP\left(\forall s \in \gS, \forall p \in \text{range}(g_s), \left| \mathbb{E}\left[\tilde{Y}| \beta(q,a)=s,g_s(h)=p\right] -p \right| \leq \epsilon ~ \Bigg | ~ \beta(q,a)=s \right) &\geq 1- \sum_{s\in \gS} \alpha_s \\
&= 1-\alpha. \\
\end{align*}
This completes the proof.
\end{proof}

To remove the assumption that $h_i$'s are distinct in Theorem~\ref{main}, we could use  the randomized version of UMD (Algorithm~2 in ~\citet{gupta2021distribution}).  This adds an additive factor to $\epsilon$ in Theorem~\ref{main} but can be made arbitrarily small.  For simplicity, we have chosen to provide the guarantees for the non-randomized version of UMD.

\section{Additional Details for Section~\ref{sec:experiments}}
\label{app:experiments}

 \subsection{Missing Experimental Details}
 \label{app:details}

\textbf{Details on LM Querying.} 
We set the LM $\texttt{temperature}$ to close to 0 to minimize output stochasticity and set $\texttt{max tokens}$ to be able to process the prompt $m(q)$.  We include the prompts in Table~\ref{tbl:prompts_used}.  The variable EXPRESSION$\_$LIST in Ling1S-Top1 prompt is taken from~\citet{fagen2015perception}.

 \textbf{Compute Resources.}
 The experiments were run using a 3090Ti GPU and 64 GB of RAM.

\textbf{Training Details.}

We perform random splitting 8 times using different seeds and use the repeated measures to form our confidence intervals.  We set the number of bins $B$ in the UMD baseline and scaling-binning by deriving hyperparameter search ranges from setting different $\epsilon$s in Theorem 3 in~\citep{gupta2021distribution}. We describe how we set the number of points per bin $b$ in Algorithm~\ref{alg:beta-binning-training} and Algorithm~\ref{alg:scaling-beta-binning-training} in Section~\ref{sec:experiments}.  We have also attempted to ensure that for every $B$ we can find in its hyperparameter tuning range, there is a $b$ in its hyperparameter tuning range, such that $B=\lfloor N/ b \rfloor$.  This correspondence, coupled with the same target variable to optimize, helps isolate the impact of $\beta$ on posthoc calibration.

\begin{table}[!htbp]
\centering
\caption{Dataset statistics. Multiple-choice QA datasets provide a set of possible answers.  We add additional prompt text for them, see Table~\ref{tbl:prompts_used}.}
 \label{tbl:datasets}
 \adjustbox{max width=\textwidth}{
 \begin{tabular}{ lcc}
 \toprule
 Dataset & Size & Type \\ \midrule
 SciQ & 11609 & QA \\
 Bigbench & 20320 & QA \\
 
 MMLU & 13869 & Multiple-choice QA \\
 OpenBookQA & 4835 & Multiple-choice QA \\
 TriviaQA & 11313 & QA \\ 
 \bottomrule
\end{tabular}
}
\end{table}

\begin{table}[!htbp]
\centering
\caption{Prompts used in the experiments.}
 \label{tbl:prompts_used}
 \adjustbox{max width=\textwidth}{
 \begin{tabular}{ p{0.2\textwidth}  p{0.8\textwidth} }
 \toprule
 Prompt Name & Prompt Template \\ \midrule
 Verb1S-Top1 & Provide your best guess and the probability that it is correct (0.0 to 1.0) for the following question. Give ONLY the guess and probability, no other words or explanation. For example: Guess: \textless most likely guess, as short as possible; not a complete sentence, just the guess! \textgreater ~ Probability: \textless the probability between 0.0 and 1.0 that your guess is correct, without any extra commentary whatsoever; just the probability! \textgreater ~ The question is: \textless $q$\textgreater. \\
 \midrule
 
 Ling1S-Top1& Provide your best guess for the following question, and describe how likely it is that your guess is correct as one of the following expressions: \${EXPRESSION_LIST}. Give ONLY the guess and your confidence, no other words or explanation. For example: Guess: \textless most likely guess, as short as possible; not a complete sentence, just the guess!\textgreater ~ Confidence: <description of confidence, without any extra commentary whatsoever; just a short phrase!> ~ The question is: \textless q \textgreater \\ \midrule
 Additional prompt text for multiple-choice QA task with a set of choices $C$&
 The answer must be chosen from the following list of size \textless $|C|$\textgreater: \textless  $C$ \textgreater.  Only the actual answer (not the choice number or index) from the list should be used in the response. \\ \midrule
 ground truth proxy &  See the example for ground truth Proxy in Table~\ref{tbl:important_variables}.  \\ \bottomrule
\end{tabular}
}
\end{table}

\begin{table}[!htbp]
\begin{center}
 \begin{small}
\caption{(Continuation of Table~\ref{tbl:results_full_dataset_main_paper}) Performance of our QA-calibration methods, QA binning (QAB), Pooled scaling QA binning (S-QAB) and Hierarchical scaling QA binning (HS-QAB)), compared to the baselines, UMD, (B, ~\citep{gupta2021distribution}), Platt Scaling (S,~\citep{platt1999probabilistic}), and Scaling-binning (S-B,~\citep{kumar2019verified}).}

\label{tbl:results_full_dataset1}
\adjustbox{max width=\textwidth}
{%
\centering
 
 \begin{tabular}{llllrr} 
 \toprule
Dataset & Prompt & LLM & Calibrator & $\CE(h;\beta)$ & $\AUAC$ \\
\midrule

SciQ & Ling1S-Top1 & Mistral & QAB (\textbf{ours}) & $\mathbf{0.165 \pm 0.003}$ & $\mathbf{0.482 \pm 0.024}$ \\
SciQ & Ling1S-Top1 & Mistral & HS-QAB (\textbf{ours}) & $0.175 \pm 0.004$ & $0.437 \pm 0.028$ \\

SciQ & Ling1S-Top1 & Mistral & S-QAB (\textbf{ours}) & $0.171 \pm 0.006$ & $0.426 \pm 0.034$ \\
SciQ & Ling1S-Top1 & Mistral & S-B & $0.466 \pm 0.003$ & $0.21 \pm 0.007$ \\
SciQ & Ling1S-Top1 & Mistral & S & $0.255 \pm 0.005$ & $0.21 \pm 0.005$ \\
SciQ & Ling1S-Top1 & Mistral &B& $0.457 \pm 0.003$ & $0.356 \pm 0.025$ \\
SciQ & Ling1S-Top1 & Mistral & None & $0.366 \pm 0.011$ & $0.451 \pm 0.009$ \\

\midrule
SciQ & Ling1S-Top1 & Gemma & QAB (\textbf{ours}) & $0.158 \pm 0.011$ & $0.357 \pm 0.023$ \\
SciQ & Ling1S-Top1 & Gemma & HS-QAB (\textbf{ours}) & $0.148 \pm 0.011$ & $0.363 \pm 0.032$ \\

SciQ & Ling1S-Top1 & Gemma & S-QAB (\textbf{ours}) & $\mathbf{0.146 \pm 0.006}$ & $0.366 \pm 0.085$ \\
SciQ & Ling1S-Top1 & Gemma & S-B & $0.485 \pm 0.003$ & $0.194 \pm 0.014$ \\
SciQ & Ling1S-Top1 & Gemma & S & $0.211 \pm 0.008$ & $0.191 \pm 0.008$ \\
SciQ & Ling1S-Top1 & Gemma &B& $0.486 \pm 0.003$ & $0.195 \pm 0.011$ \\
SciQ & Ling1S-Top1 & Gemma & None & $0.446 \pm 0.011$ & $\mathbf{0.403 \pm 0.018}$ \\
\midrule
SciQ & Verb1S-Top1 & Mistral & QAB (\textbf{ours}) & $0.262 \pm 0.009$ & $0.428 \pm 0.027$ \\
SciQ & Verb1S-Top1 & Mistral & HS-QAB (\textbf{ours}) & $0.203 \pm 0.006$ & $\mathbf{0.474 \pm 0.048}$ \\

SciQ & Verb1S-Top1 & Mistral & S-QAB (\textbf{ours}) & $\mathbf{0.182 \pm 0.006}$ & $\mathbf{0.474 \pm 0.095}$ \\
SciQ & Verb1S-Top1 & Mistral & S-B & $0.472 \pm 0.003$ & $0.237 \pm 0.006$ \\
SciQ & Verb1S-Top1 & Mistral & S & $0.228 \pm 0.005$ & $0.27 \pm 0.02$ \\
SciQ & Verb1S-Top1 & Mistral &B& $0.458 \pm 0.002$ & $0.377 \pm 0.012$ \\
SciQ & Verb1S-Top1 & Mistral & None & $0.451 \pm 0.013$ & $0.451 \pm 0.015$ \\
\midrule
SciQ & Verb1S-Top1 & Gemma & QAB (\textbf{ours}) & $0.268 \pm 0.005$ & $0.278 \pm 0.021$ \\

SciQ & Verb1S-Top1 & Gemma & HS-QAB (\textbf{ours}) & $0.194 \pm 0.006$ & $0.386 \pm 0.034$ \\

SciQ & Verb1S-Top1 & Gemma & S-QAB (\textbf{ours}) & $\mathbf{0.182 \pm 0.005}$ & $0.279 \pm 0.023$ \\
SciQ & Verb1S-Top1 & Gemma & S-B & $0.477 \pm 0.002$ & $0.149 \pm 0.013$ \\
SciQ & Verb1S-Top1 & Gemma & S & $0.243 \pm 0.007$ & $0.246 \pm 0.008$ \\
SciQ & Verb1S-Top1 & Gemma &B& $0.464 \pm 0.003$ & $0.344 \pm 0.021$ \\
SciQ & Verb1S-Top1 & Gemma & None & $0.495 \pm 0.01$ & $\mathbf{0.421 \pm 0.01}$ \\
\midrule
TriviaQA & Ling1S-Top1 & Mistral & QAB (\textbf{ours}) & $0.26 \pm 0.003$ & $0.467 \pm 0.024$ \\
TriviaQA & Ling1S-Top1 & Mistral & HS-QAB (\textbf{ours}) & $0.195 \pm 0.007$ & $0.499 \pm 0.014$ \\

TriviaQA & Ling1S-Top1 & Mistral & S-QAB (\textbf{ours}) & $\mathbf{0.184 \pm 0.005}$ & $\mathbf{0.541 \pm 0.021}$ \\
TriviaQA & Ling1S-Top1 & Mistral & S-B & $0.441 \pm 0.003$ & $0.275 \pm 0.006$ \\
TriviaQA & Ling1S-Top1 & Mistral & S & $0.266 \pm 0.005$ & $0.28 \pm 0.006$ \\
TriviaQA & Ling1S-Top1 & Mistral &B& $0.432 \pm 0.003$ & $0.441 \pm 0.009$ \\
TriviaQA & Ling1S-Top1 & Mistral & None & $0.326 \pm 0.008$ & $0.537 \pm 0.01$ \\
\midrule
TriviaQA & Ling1S-Top1 & Gemma & QAB (\textbf{ours}) & $0.268 \pm 0.007$ & $0.310 \pm 0.034$ \\
TriviaQA & Ling1S-Top1 & Gemma & HS-QAB (\textbf{ours}) & $0.188 \pm 0.02$ & $\mathbf{0.359 \pm 0.035}$ \\

TriviaQA & Ling1S-Top1 & Gemma & S-QAB (\textbf{ours}) & $\mathbf{0.18 \pm 0.005}$ & $0.351 \pm 0.051$ \\
TriviaQA & Ling1S-Top1 & Gemma & S-B & $0.235 \pm 0.015$ & $0.310 \pm 0.007$ \\
TriviaQA & Ling1S-Top1 & Gemma & S & $0.211 \pm 0.012$ & $0.086 \pm 0.005$ \\
TriviaQA & Ling1S-Top1 & Gemma &B& $0.468 \pm 0.003$ & $0.199 \pm 0.019$ \\
TriviaQA & Ling1S-Top1 & Gemma & None & $0.486 \pm 0.01$ & $0.349 \pm 0.007$ \\
\midrule
TriviaQA & Verb1S-Top1 & Mistral & QAB (\textbf{ours}) & $0.242 \pm 0.006$ & $\mathbf{0.618 \pm 0.016}$ \\
TriviaQA & Verb1S-Top1 & Mistral & HS-QAB (\textbf{ours}) & $0.191 \pm 0.01$ & $0.573 \pm 0.024$ \\

TriviaQA & Verb1S-Top1 & Mistral & S-QAB (\textbf{ours}) & $\mathbf{0.18 \pm 0.006}$ & $0.545 \pm 0.029$ \\
TriviaQA & Verb1S-Top1 & Mistral & S-B & $0.436 \pm 0.005$ & $0.324 \pm 0.007$ \\
TriviaQA & Verb1S-Top1 & Mistral & S & $0.27 \pm 0.005$ & $0.352 \pm 0.014$ \\
TriviaQA & Verb1S-Top1 & Mistral &B& $0.409 \pm 0.005$ & $0.525 \pm 0.019$ \\
TriviaQA & Verb1S-Top1 & Mistral & None & $0.38 \pm 0.01$ & $0.582 \pm 0.011$ \\
\midrule
TriviaQA & Verb1S-Top1 & Gemma & QAB (\textbf{ours}) & $0.249 \pm 0.006$ & $0.381 \pm 0.02$ \\
TriviaQA & Verb1S-Top1 & Gemma & HS-QAB (\textbf{ours}) & $0.192 \pm 0.006$ & $\mathbf{0.417 \pm 0.034}$ \\

TriviaQA & Verb1S-Top1 & Gemma & S-QAB (\textbf{ours}) & $\mathbf{0.183 \pm 0.008}$ & $0.337 \pm 0.038$ \\
TriviaQA & Verb1S-Top1 & Gemma & S-B & $0.444 \pm 0.003$ & $0.231 \pm 0.012$ \\
TriviaQA & Verb1S-Top1 & Gemma & S & $0.258 \pm 0.008$ & $0.225 \pm 0.006$ \\
TriviaQA & Verb1S-Top1 & Gemma &B& $0.429 \pm 0.004$ & $0.37 \pm 0.015$ \\
TriviaQA & Verb1S-Top1 & Gemma & None & $0.527 \pm 0.008$ & $\mathbf{0.417 \pm 0.008}$ \\

\bottomrule
\end{tabular}
}
\end{small}
\end{center}
\end{table}

\begin{table}[!htbp]
\begin{center}
\begin{small}
\caption{(Continuation of Table~\ref{tbl:results_full_dataset_main_paper}) Performance of our QA-calibration methods, QA binning (QAB), Pooled scaling QA binning (S-QAB) and Hierarchical scaling QA binning (HS-QAB)), compared to the baselines, UMD, (B, ~\citep{gupta2021distribution}), Platt Scaling (S,~\citep{platt1999probabilistic}), and Scaling-binning (S-B,~\citep{kumar2019verified}).}

 \label{tbl:results_full_dataset2}.
\adjustbox{max width=\textwidth}
{%
\centering
 \begin{tabular}{llllrr} 
 \toprule
Dataset & Prompt & LLM & Calibrator & $\CE(h;\beta)$ & $\AUAC$ \\
\midrule
OpenBookQA & Ling1S-Top1 & Mistral & QAB (\textbf{ours}) & $0.285 \pm 0.009$ & $0.353 \pm 0.043$ \\
OpenBookQA & Ling1S-Top1 & Mistral & HS-QAB (\textbf{ours}) & $0.211 \pm 0.015$ & $0.37 \pm 0.065$ \\

OpenBookQA & Ling1S-Top1 & Mistral & S-QAB (\textbf{ours}) & $\mathbf{0.193 \pm 0.012}$ & $0.34 \pm 0.062$ \\
OpenBookQA & Ling1S-Top1 & Mistral & S-B & $0.477 \pm 0.002$ & $0.135 \pm 0.004$ \\
OpenBookQA & Ling1S-Top1 & Mistral & S & $0.32 \pm 0.006$ & $0.133 \pm 0.005$ \\
OpenBookQA & Ling1S-Top1 & Mistral &B& $0.471 \pm 0.004$ & $0.255 \pm 0.023$ \\
OpenBookQA & Ling1S-Top1 & Mistral & None & $0.441 \pm 0.013$ & $\mathbf{0.38 \pm 0.015}$ \\
\midrule
OpenBookQA & Ling1S-Top1 & Gemma & QAB (\textbf{ours}) & $0.316 \pm 0.01$ & $0.25 \pm 0.028$ \\
OpenBookQA & Ling1S-Top1 & Gemma & HS-QAB (\textbf{ours}) & $0.221 \pm 0.016$ & $0.261 \pm 0.032$ \\

OpenBookQA & Ling1S-Top1 & Gemma & S-QAB (\textbf{ours}) & $\mathbf{0.215 \pm 0.011}$ & $0.257 \pm 0.082$ \\
OpenBookQA & Ling1S-Top1 & Gemma & S-B & $0.415 \pm 0.005$ & $0.129 \pm 0.009$ \\
OpenBookQA & Ling1S-Top1 & Gemma & S & $0.421 \pm 0.01$ & $0.152 \pm 0.013$ \\
OpenBookQA & Ling1S-Top1 & Gemma &B& $0.483 \pm 0.003$ & $0.12 \pm 0.017$ \\
OpenBookQA & Ling1S-Top1 & Gemma & None & $0.434 \pm 0.015$ & $\mathbf{0.31 \pm 0.017}$ \\
\midrule
OpenBookQA & Verb1S-Top1 & Mistral & QAB (\textbf{ours}) & $0.288 \pm 0.008$ & $0.338 \pm 0.035$ \\
OpenBookQA & Verb1S-Top1 & Mistral & HS-QAB (\textbf{ours}) & $0.232 \pm 0.012$ & $\mathbf{0.344 \pm 0.018}$ \\

OpenBookQA & Verb1S-Top1 & Mistral & S-QAB (\textbf{ours}) & $\mathbf{0.225 \pm 0.011}$ & $0.316 \pm 0.027$ \\
OpenBookQA & Verb1S-Top1 & Mistral & S-B & $0.445 \pm 0.006$ & $0.331 \pm 0.011$ \\
OpenBookQA & Verb1S-Top1 & Mistral & S & $0.282 \pm 0.01$ & $0.3 \pm 0.009$ \\
OpenBookQA & Verb1S-Top1 & Mistral &B& $0.447 \pm 0.003$ & $0.28 \pm 0.02$ \\
OpenBookQA & Verb1S-Top1 & Mistral & None & $0.561 \pm 0.018$ & $0.311 \pm 0.021$ \\
\midrule
OpenBookQA & Verb1S-Top1 & Gemma & S-QAB (\textbf{ours}) & $\mathbf{0.228 \pm 0.011}$ & $\mathbf{0.343 \pm 0.027}$ \\
OpenBookQA & Verb1S-Top1 & Gemma & S-B & $0.410 \pm 0.006$ & $0.301 \pm 0.011$ \\
OpenBookQA & Verb1S-Top1 & Gemma & S & $0.298 \pm 0.01$ & $0.280 \pm 0.011$ \\
OpenBookQA & Verb1S-Top1 & Gemma &B& $0.447 \pm 0.003$ & $0.293 \pm 0.04$ \\
OpenBookQA & Verb1S-Top1 & Gemma & None & $0.561 \pm 0.018$ & $0.315 \pm 0.036$ \\
OpenBookQA & Verb1S-Top1 & Gemma & QAB (\textbf{ours}) & $0.261 \pm 0.008$ & $0.332 \pm 0.048$ \\
OpenBookQA & Verb1S-Top1 & Gemma & HS-QAB (\textbf{ours}) & $\mathbf{0.228 \pm 0.012}$ & $0.321 \pm 0.011$ \\
\midrule

BigBench & Ling1S-Top1 & Mistral & QAB (\textbf{ours}) & $0.195 \pm 0.002$ & $0.622 \pm 0.009$ \\
BigBench & Ling1S-Top1 & Mistral & HS-QAB (\textbf{ours}) & $\mathbf{0.138 \pm 0.006}$ & $\mathbf{0.690 \pm 0.031}$ \\

BigBench & Ling1S-Top1 & Mistral & S-QAB (\textbf{ours}) & $0.157 \pm 0.010$ & $0.641 \pm 0.02$ \\
BigBench & Ling1S-Top1 & Mistral & S-B & $0.387 \pm 0.006$ & $0.523 \pm 0.006$ \\
BigBench & Ling1S-Top1 & Mistral & S & $0.224 \pm 0.003$ & $0.52 \pm 0.004$ \\
BigBench & Ling1S-Top1 & Mistral &B& $0.379 \pm 0.003$ & $0.612 \pm 0.017$ \\
BigBench & Ling1S-Top1 & Mistral & None & $0.245 \pm 0.004$ & $0.684 \pm 0.007$ \\
\midrule
BigBench & Ling1S-Top1 & Gemma & QAB (\textbf{ours}) & $0.198\pm 0.006$ & $0.41 \pm 0.012$ \\
BigBench & Ling1S-Top1 & Gemma & HS-QAB (\textbf{ours}) & $\mathbf{0.142\pm 0.009}$ & $\mathbf{0.421 \pm 0.021}$ \\

BigBench & Ling1S-Top1 & Gemma & S-QAB (\textbf{ours}) & $0.193 \pm 0.002$ & $0.321 \pm 0.052$ \\
BigBench & Ling1S-Top1 & Gemma & S-B & $0.326 \pm 0.002$ & $0.302 \pm 0.005$ \\
BigBench & Ling1S-Top1 & Gemma & S & $0.180 \pm 0.006$ & $0.298 \pm 0.005$ \\
BigBench & Ling1S-Top1 & Gemma &B& $0.468 \pm 0.001$ & $0.293 \pm 0.014$ \\
BigBench & Ling1S-Top1 & Gemma & None & $0.427 \pm 0.006$ & $0.412 \pm 0.01$ \\
\midrule
BigBench & Verb1S-Top1 & Mistral & QAB (\textbf{ours}) & $0.161 \pm 0.003$ & $0.665 \pm 0.031$ \\
BigBench & Verb1S-Top1 & Mistral & HS-QAB (\textbf{ours}) & $\mathbf{0.132 \pm 0.009}$ & $0.672 \pm 0.079$ \\

BigBench & Verb1S-Top1 & Mistral & S-QAB (\textbf{ours}) & $0.134 \pm 0.004$ & $\mathbf{0.679 \pm 0.012}$ \\
BigBench & Verb1S-Top1 & Mistral & S-B & $0.361 \pm 0.006$ & $0.513 \pm 0.016$ \\
BigBench & Verb1S-Top1 & Mistral & S & $0.208 \pm 0.004$ & $0.522 \pm 0.018$ \\
BigBench & Verb1S-Top1 & Mistral &B& $0.351 \pm 0.002$ & $0.592 \pm 0.014$ \\
BigBench & Verb1S-Top1 & Mistral & None & $0.248 \pm 0.005$ & $0.671 \pm 0.005$ \\
\midrule
BigBench & Verb1S-Top1 & Gemma & QAB (\textbf{ours}) & $0.171 \pm 0.003$ & $\mathbf{0.503 \pm 0.019}$ \\
BigBench & Verb1S-Top1 & Gemma & HS-QAB (\textbf{ours}) & $\mathbf{0.159 \pm 0.006}$ & $0.502 \pm 0.011$ \\

BigBench & Verb1S-Top1 & Gemma & S-QAB (\textbf{ours}) & $0.215 \pm 0.005$ & $0.465 \pm 0.024$ \\
BigBench & Verb1S-Top1 & Gemma & S-B & $0.456 \pm 0.001$ & $0.289 \pm 0.011$ \\
BigBench & Verb1S-Top1 & Gemma & S & $0.259 \pm 0.004$ & $0.292 \pm 0.007$ \\
BigBench & Verb1S-Top1 & Gemma &B& $0.431 \pm 0.002$ & $0.441 \pm 0.014$ \\
BigBench & Verb1S-Top1 & Gemma & None & $0.458 \pm 0.01$ & $0.499 \pm 0.01$ \\

\bottomrule
\end{tabular}
}
\end{small}
\end{center}
\end{table}

\clearpage
\newpage
\section{KD-tree construction for QA-calibration}
\label{app:kd-tree}

We adapt kd-tree~\citep{bentley1975multidimensional} to construct our $\beta$ partitions.  We first recall the standard construction of a kd-tree. Let $Z_i=(Z_{i1},\ldots,Z_{im},\ldots,Z_{iM}),i=1,\ldots,N$ be the $M$-dimensional dataset to bin and let $H^k$ with $k=0$ represent $\{ Z_{1},\ldots,Z_{N} \}$.  Let $z^k_m$ denote the median of all the $m$th coordinates of the $Z$s in $H^k$. Let $Z[p]$ denote the $p$th coordinate of a $Z$ vector.  Set base case value of $k=0$.  

The partitioning scheme to be applied recursively is as follows: split $H^k$ into two halfspaces by pivoting on $z^k_m$, to obtain $H^{2k+1} := \left\{ Z \in H^k : Z[p] \leq z^k_m  \right\}$ and $H^{2k+2} := \left\{ Z \in H^k : Z[p] > z^k_m  \right\}$.  The coordinate index $m$ is set to $\floor{\log 2}(k + 1) + 1$, i.e., as we split the nodes, we only change the coordinate index when we switch to another level.  When we reach $m=M$, we reset $m=0$.

In our setting, we form bounded spaces so that our calibrators that contain a QA binning subroutine can generalize well during test time (outliers that are the closest to the boundary of a space will not be assigned to that space).  We take the observations with the smallest and largest order statistic in each coordinate used for pivoting, and use them as bounding values for that coordinate.

At test-time, an $M$-vector instance is inserted into one of the leaves by following the same rule as in the partitioning scheme. If any coordinate value is outside its bounding values, assign it to none of the leaves. Otherwise, cycle through the $M$ coordinates and assign the instance to the left tree if the coordinate is less than or equal to $z^k_m$, or to the right tree otherwise. Repeat until a leaf is reached.

\section{Generalizing maximum calibration error (MCE) to $\beta$ maximum calibration error (QA MCE)}
\label{app:beta-mce}

\citet{guo2017calibration} defined the maximum calibration error (MCE) as:
\begin{equation*} 
\text{MCE}(h;\beta) = 
    \sup_{r \in \text{range}(h)}\left[ 
        \left|
            \mathbb{E} [Y \mid h(Q,A)=r] - r
        \right|
        \right].
\end{equation*}

We generalize this definition by incorporating $\beta$:
\begin{definition}[QA Maximum Calibration Error] The QA maximum calibration error (QA MCE) of $h$ is defined as:
\label{def:betaMCE}
\begin{equation*} 
\beta\text{-MCE}(h;\beta) = 
    \max_{s\in \text{range}(\beta)}\sup_{r \in \text{range}(h)}\left[ 
        \left|
            \mathbb{E} [Y \mid h(Q,A)=r, \beta(Q,A)=s] - r
        \right|
        \right].
\end{equation*}
\end{definition}

This metric assesses the maximum deviation between the confidences and true probabilities over all confidences and partitions.  Intuitively, the confidence $r$  that achieves the supremum in MCE may differ from the one that maximizes over partitions in $\beta\text{-MCE}$, as the latter is also conditioned on the specific partition.

\section{Qualitative examples of partition content}
\label{app:examples-beta-partition}

{Examples of question-and-answer pairs assigned to $\beta$ partitions are presented in Table~\ref{tbl:qualitative_samples_qa_pairs}.

\begin{table}[!ht]
\begin{center}
\caption{Qualitative examples of three random partitions and four random question-and-answer pairs from the OpenBookQA dataset, a multiple-choice dataset.  The prompt used is Ling1STop1 (Table~\ref{tbl:prompts_used}) and answers are generated using Mistral. The partitioning function $\beta$ leverages DistilBERT embeddings and a kd-tree with a maximum depth of 7.  The underlying topic for the first two partitions seem clear: ``water'' and ``animal'', and the third partition is likely to have identified the form of the question.}
\label{tbl:qualitative_samples_qa_pairs}
\adjustbox{max width=1\textwidth}{
\begin{tabular}{p{0.8\textwidth} p{0.3\textwidth} }
\toprule
Question & LLM Answer \\
\midrule
If weather is stormy then there is a greater chance of & everything getting wet \\
Bill missed high tide, so he had to wait until when to see it again & tomorrow \\
Great lakes may have come to be thanks to & ice pillars \\
If a flood is occurring, there was most likely & great droplets repeating \\

\midrule
Some animals are aided in finding food sources by & aroma \\
Flowers make themselves attractive to hummingbirds with & an optimal angle \\
Hummingbirds gather nectar using their & bills \\

An example of an adult animal laying eggs is all aside from & kittens \\

\midrule
When a human's organs stop working and he stops breathing, that person & perished \\
If a thing experiences a burning combustion, then it is & damaged \\
An exertion on a thing that is going against the thing's intended direction, when in motion will & oppose it \\
When we think of bees, we also think of pollen. This is because bees & consume it \\
\bottomrule
\end{tabular}
}
\end{center}
\end{table}

\section{Extra experiments with varying $\beta$ choices}\label{app:varying-beta}

In addition to using kd-tree as $\beta_{\text{bin}}$ and DistilBERT embedding as $\beta_{\text{emb}}$, we also use k-means as $\beta_{\text{bin}}$ and XLNet~\citep{yang2019xlnet} embeddings as $\beta_{\text{emb}}$  We utilize the \texttt{[CLS]} token embedding from a pre-trained XLNet and conduct a hyperparameter search over the number of centroids, using AUAC as the validation metric.  We present the results in Table~\ref{tbl:varying_beta}. Similar patterns to those in Table~\ref{tbl:results_full_dataset_main_paper} are observed: our methods, S-QAB and HS-QAB, achieve the highest performance for $\CE(h;\beta)$ and AUAC.

\begin{table}[!htbp]
\begin{center}
\begin{small}
\caption{We vary $\beta_{\text{emb}} \in \{\text{DistilBERT}, \text{XLNet}\}$ and $\beta_{\text{bin}} \in \{\text{kd-tree},\text{k-means}\}$. Note that AUAC metrics for S-B, S, B and None remain constant as they do not utilize $\beta$.  In this experiment, we observe that across the four variations listed above, the optimal $\beta_{\text{bin}}$ results in a comparable number of partitions.}

\label{tbl:varying_beta}.
\adjustbox{max width=\textwidth}
{%
\centering
 \begin{tabular}{llllllrr} 
 \toprule
Dataset & Prompt & LLM & Calibrator & $\beta_{\text{emb}}$ & $\beta_{\text{bin}}$ & $\CE(h;\beta)$ & $\AUAC$ \\
\midrule
OpenBookQA & Verb1S-Top1 & Mistral & QAB (\textbf{ours}) & DistilBERT & kd-tree &  $0.288 \pm 0.008$ & $0.338 \pm 0.035$ \\
OpenBookQA & Verb1S-Top1 & Mistral & HS-QAB (\textbf{ours}) & DistilBERT & kd-tree &  $0.232 \pm 0.012$ & $\mathbf{0.344 \pm 0.018}$ \\
OpenBookQA & Verb1S-Top1 & Mistral & S-QAB (\textbf{ours}) & DistilBERT & kd-tree &  $\mathbf{0.225 \pm 0.011}$ & $0.316 \pm 0.027$ \\
OpenBookQA & Verb1S-Top1 & Mistral & S-B & DistilBERT & kd-tree &  $0.445 \pm 0.006$ & $0.331 \pm 0.011$ \\
OpenBookQA & Verb1S-Top1 & Mistral & S & DistilBERT & kd-tree & $0.282 \pm 0.01$ & $0.3 \pm 0.009$ \\
OpenBookQA & Verb1S-Top1 & Mistral &B& DistilBERT & kd-tree & $0.447 \pm 0.003$ & $0.28 \pm 0.02$ \\
OpenBookQA & Verb1S-Top1 & Mistral & None & DistilBERT & kd-tree &  $0.561 \pm 0.018$ & $0.311 \pm 0.021$ \\
\midrule
OpenBookQA & Verb1S-Top1 & Mistral & QAB (\textbf{ours}) & DistilBERT & k-means &  $0.310 \pm 0.009$ & $0.334 \pm 0.012$ \\
OpenBookQA & Verb1S-Top1 & Mistral & HS-QAB (\textbf{ours}) & DistilBERT & k-means &  $\mathbf{0.240 \pm 0.013}$ & $\mathbf{0.340 \pm 0.019}$ \\
OpenBookQA & Verb1S-Top1 & Mistral & S-QAB (\textbf{ours}) & DistilBERT & k-means &  $0.255 \pm 0.012$ & $0.321 \pm 0.019$ \\
OpenBookQA & Verb1S-Top1 & Mistral & S-B & DistilBERT & k-means &  $0.450 \pm 0.007$ & $0.331 \pm 0.011$ \\
OpenBookQA & Verb1S-Top1 & Mistral & S & DistilBERT & k-means & $0.285 \pm 0.011$ & $0.3 \pm 0.009$ \\
OpenBookQA & Verb1S-Top1 & Mistral & B & DistilBERT & k-means & $0.452 \pm 0.004$ & $0.28 \pm 0.02$ \\
OpenBookQA & Verb1S-Top1 & Mistral & None & DistilBERT & k-means &  $0.565 \pm 0.018$ & $0.311 \pm 0.021$ \\
\midrule
OpenBookQA & Verb1S-Top1 & Mistral & QAB (\textbf{ours}) & XLNet & kd-tree &  $0.295 \pm 0.010$ & $0.337 \pm 0.034$ \\
OpenBookQA & Verb1S-Top1 & Mistral & HS-QAB (\textbf{ours}) & XLNet & kd-tree &  $0.238 \pm 0.012$ & $0.341 \pm 0.021$ \\
OpenBookQA & Verb1S-Top1 & Mistral & S-QAB (\textbf{ours}) & XLNet & kd-tree &  $\mathbf{0.220 \pm 0.011}$ & $\mathbf{0.347 \pm 0.017}$ \\
OpenBookQA & Verb1S-Top1 & Mistral & S-B & XLNet & kd-tree &  $0.442 \pm 0.008$ & $0.331 \pm 0.011$ \\
OpenBookQA & Verb1S-Top1 & Mistral & S & XLNet & kd-tree & $0.280 \pm 0.012$ & $0.3 \pm 0.009$ \\
OpenBookQA & Verb1S-Top1 & Mistral & B & XLNet & kd-tree & $0.450 \pm 0.004$ & $0.28 \pm 0.02$ \\
OpenBookQA & Verb1S-Top1 & Mistral & None & XLNet & kd-tree &  $0.561 \pm 0.018$ & $0.311 \pm 0.021$ \\
\midrule
OpenBookQA & Verb1S-Top1 & Mistral & QAB (\textbf{ours}) & XLNet & k-means &  $0.312 \pm 0.011$ & $0.335 \pm 0.033$ \\
OpenBookQA & Verb1S-Top1 & Mistral & HS-QAB (\textbf{ours}) & XLNet & k-means &  $0.246 \pm 0.013$ & $0.342 \pm 0.019$ \\
OpenBookQA & Verb1S-Top1 & Mistral & S-QAB (\textbf{ours}) & XLNet & k-means &  $\mathbf{0.235 \pm 0.010}$ & $\mathbf{0.350 \pm 0.018}$ \\
OpenBookQA & Verb1S-Top1 & Mistral & S-B & XLNet & k-means &  $0.448 \pm 0.007$ & $0.331 \pm 0.011$ \\
OpenBookQA & Verb1S-Top1 & Mistral & S & XLNet & k-means & $0.283 \pm 0.011$ & $0.3 \pm 0.009$ \\
OpenBookQA & Verb1S-Top1 & Mistral & B & XLNet & k-means & $0.455 \pm 0.005$ & $0.28 \pm 0.02$ \\
OpenBookQA & Verb1S-Top1 & Mistral & None & XLNet & k-means &  $0.565 \pm 0.018$ & $0.311 \pm 0.021$ \\
\bottomrule
\end{tabular}
}
\end{small}
\end{center}
\end{table}

\section{Scatterplot of Recalibrated Confidence Scores over the Confidence Elicitation Baseline}
\label{app:scatterplot_of_recalibrated_confidence_scores_against_baselines}

We examine how the recalibrated confidence scores differ for baseline confidence scores obtained using the Ling1STop1 (Figure~\ref{fig:binning_vs_beta_binning}).  The confidence scores obtained by our method QA binning have higher sharpness~\citep{brier1950verification, gneiting2007probabilistic} since we are fitting a UMD fitter per partition. The baseline UMD (B), shown in blue, maps an elicited confidence score from Ling1STop1 to either a single value or a small set of nearby values, due to the delta-randomization in UMD.  In contrast, our method, QA binning (QAB), shown in grey, applies a partition-specific UMD that adapts to the specific question-and-answer pair. As a result, the scores calibrated by QA binning span a broader range. This is intuitively reasonable, as a prompt-elicited confidence score of 0.1 from a topic like ``geography'' may require a different recalibration compared to the one from the ``politics''. 

\begin{figure}[t]
\centering
        \includegraphics[width=\textwidth]{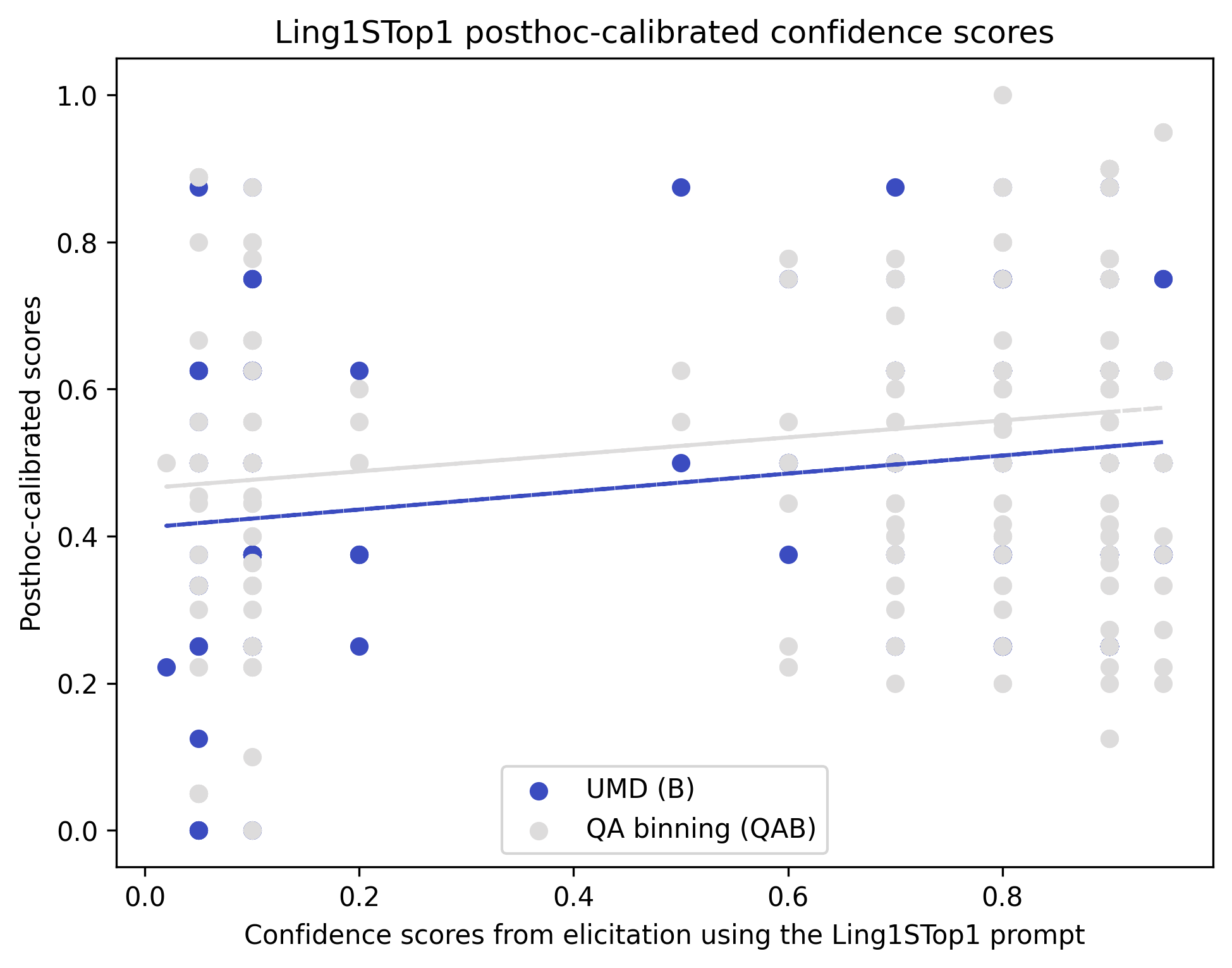}
        \caption{
        Scatter plot and regression lines for posthoc-calibrated scores vs confidence scores from Ling1STop1. Note that confidence scores from Ling1STop1 (shown in the the x-axis) are discretized since confidence statement is drawn from an expression list (Table~\ref{tbl:prompts_used}).  We use the OpenBookQA dataset and Mistral LLM. 
        }
    \label{fig:binning_vs_beta_binning}
\end{figure}

\section{Reliability analysis of QA partitioned data}
\label{app:reliability_plot_of_unpartitioned_partitioned_data}

In this section, we demonstrate reliability analysis using ECE and reliability plot through an example.  We perform an analysis for each of the four $\beta$ partitions (derived using a kd-tree with a maximum depth of two) in Figure~\ref{fig:par_rel_binning_vs_beta_binning}. We compare the baselines None (from Ling1S-Top1 prompt) and UMD (which is known to best optimize ECE) against our method QA binnning. Partition-wise analysis reveals that the ECE for each partition (representing question-and-answer pairs that are semantically similar) is best achieved by our method, QA binning. However, performing this reliability analysis becomes increasingly challenging and unwieldy as the cardinality of the range of $\beta$ grows.

\begin{figure}[htbp]
    \centering
    \begin{minipage}{0.45\textwidth}
        \centering
        \includegraphics[width=\textwidth]{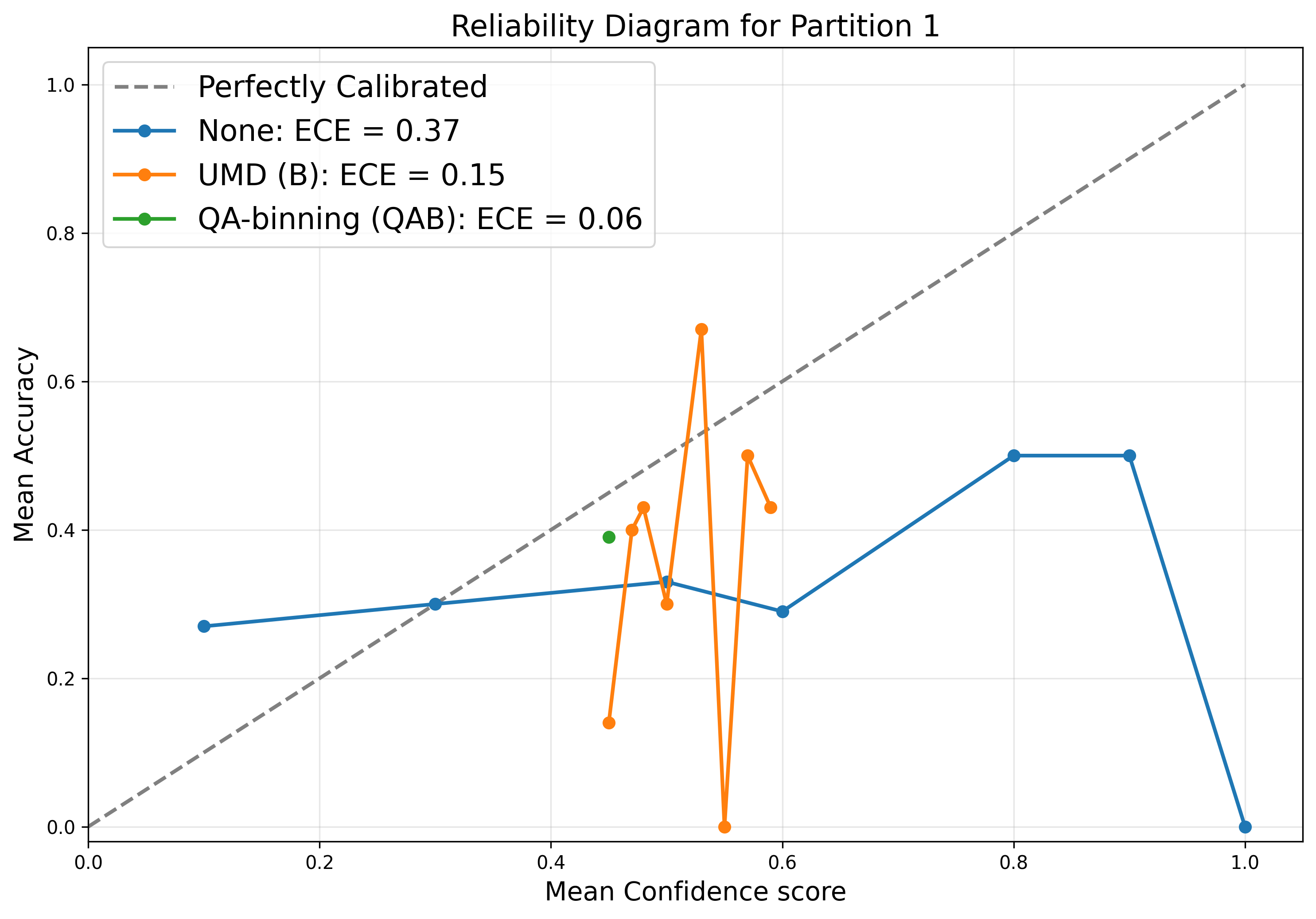} 
    \end{minipage}
    \hfill
    \begin{minipage}{0.45\textwidth}
        \centering
        \includegraphics[width=\textwidth]{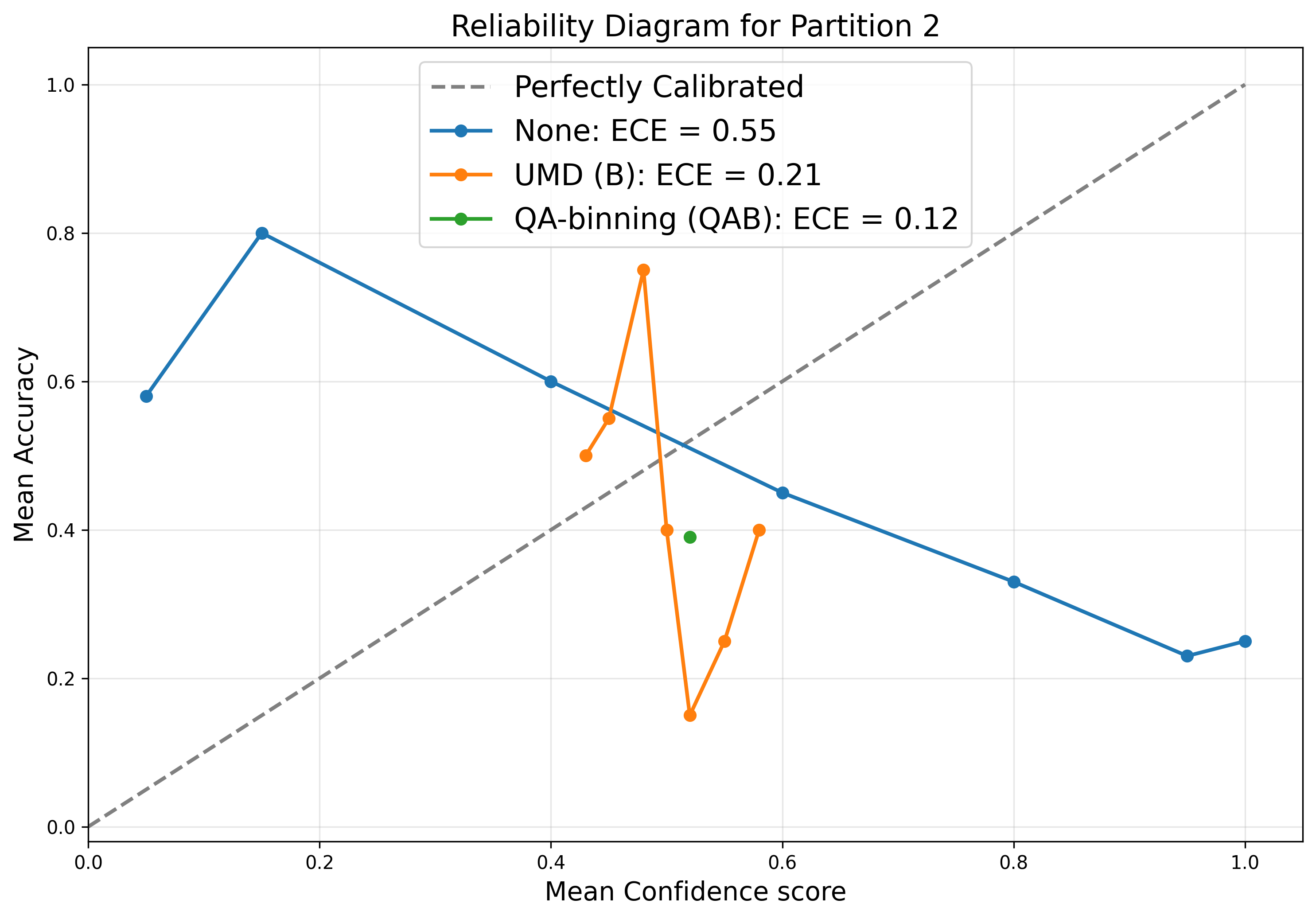} 
    \end{minipage}
    
    \vspace{1em} 
    \begin{minipage}{0.45\textwidth}
        \centering
        \includegraphics[width=\textwidth]{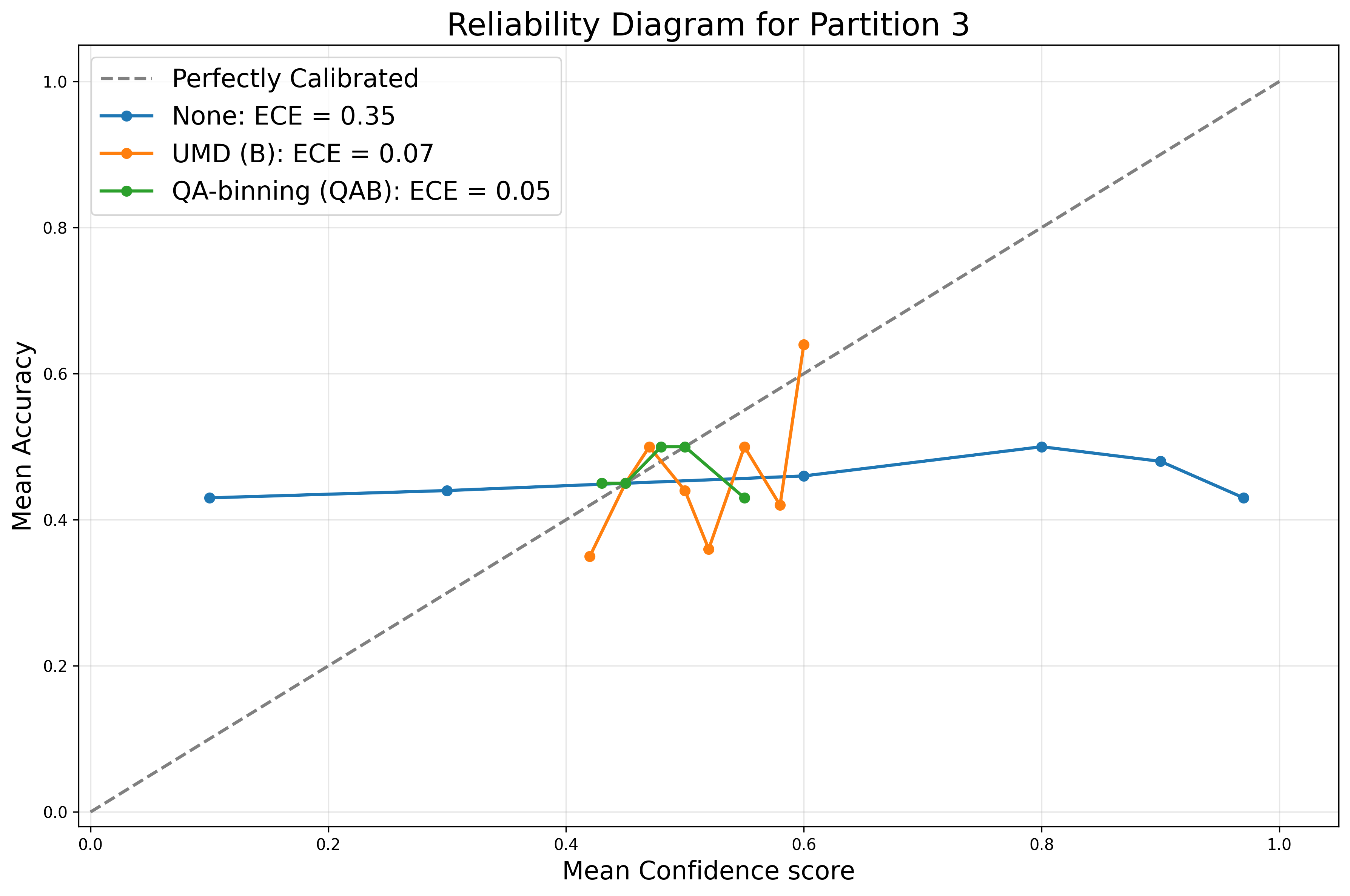} 
    \end{minipage}
    \hfill
    \begin{minipage}{0.45\textwidth}
        \centering
        \includegraphics[width=\textwidth]{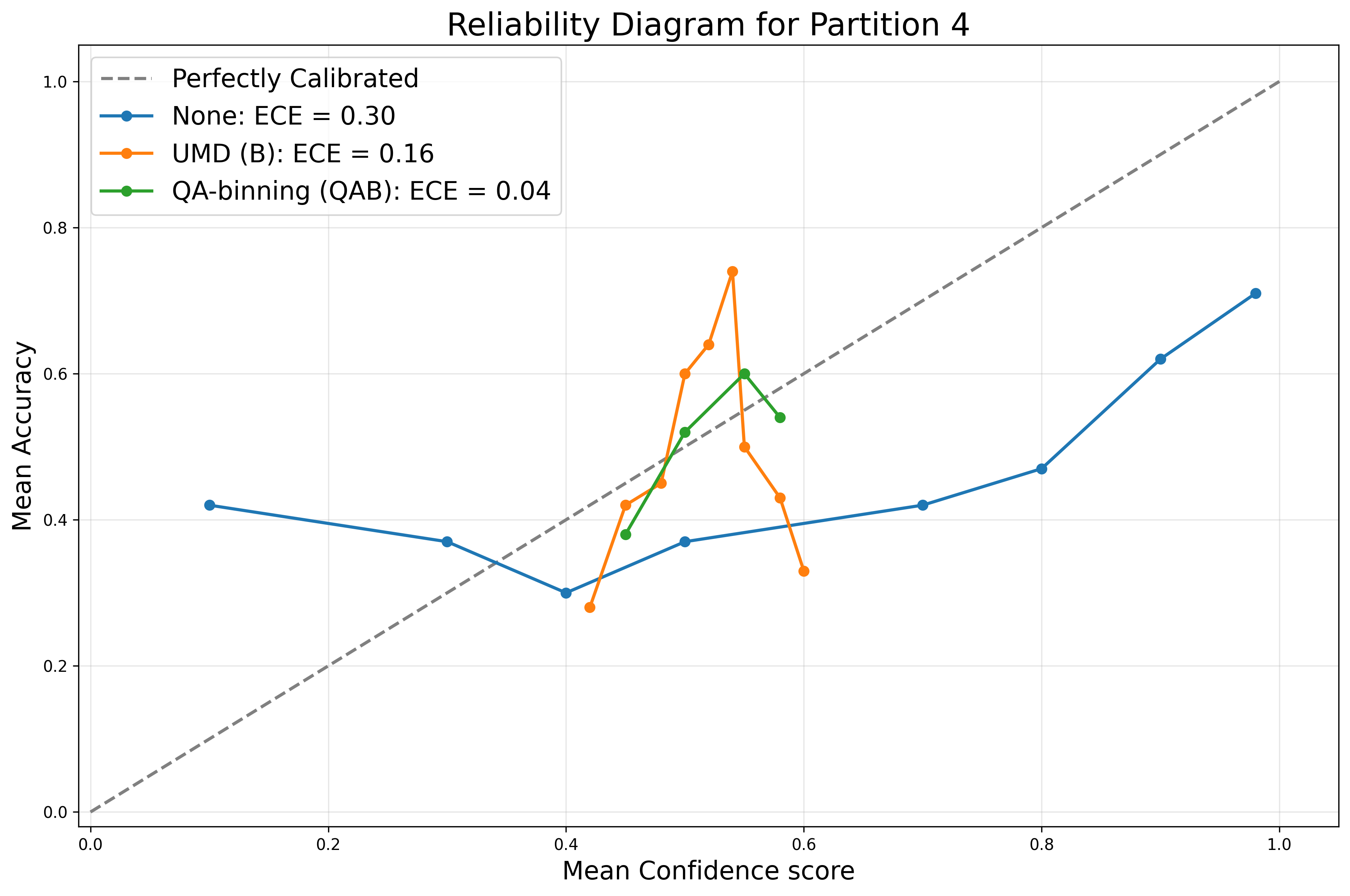} 
    \end{minipage}
    
    \caption{The reliability plot compares the baselines None , UMD (B), and our method, QA binning (QAB), across each QA partition. Using the OpenBookQA dataset,  Ling1S-Top1 prompt, a Mistral LLM, and a kd-tree with a maximum depth of two, four QA partitions are generated, and we conduct an analysis on each partition.  UMD (B) and QA binning (QAB), provide confidence scores that are better calibrated at each of the partition compared to None. However, QA binning (QAB) yields better calibrated confidence scores than UMD (B).}
    \label{fig:par_rel_binning_vs_beta_binning}
\end{figure}

\end{document}